\title[Sharper Bounds for Uniformly Stable Algorithms]{Sharper Bounds for Uniformly Stable Algorithms}
\newcommand{\Ind}{\boldsymbol{1}}
\newcommand{\E}{\mathbb{E}}
\newcommand{\R}{\mathbb{R}}
\def\P{\mathbb{P}}
\def\Bc{{\breve{B}}}
\def\cond{\vert\;}
\begin{document}

\maketitle

\begin{abstract}%
Deriving generalization bounds for stable algorithms is a classical question in learning theory taking its roots in the early works by \cite{Vapnik74} and \cite{Rogers78}. In a series of recent breakthrough papers by \cite{Feldman2018, Feldman2019}, it was shown that the best known high probability upper bounds for uniformly stable learning algorithms due to \cite{Bousquet02} are sub-optimal in some natural regimes. To do so, they proved two generalization bounds that significantly outperform the simple generalization bound of~\cite{Bousquet02}. Feldman and Vondr\'ak also asked if it is possible to provide sharper bounds and prove corresponding high probability lower bounds. This paper is devoted to these questions: firstly, inspired by  the original arguments of~\cite{Feldman2019}, we provide a short proof of the moment bound that implies the generalization bound stronger than both recent results~\citep{Feldman2018, Feldman2019}. Secondly, we prove general lower bounds, showing that our moment bound is sharp (up to a logarithmic factor) unless some additional properties of the corresponding random variables are used. Our main probabilistic result is a general concentration inequality for weakly correlated random variables, which may be of independent interest.
\end{abstract}


\section{Introduction}

The main motivation of our study is the analysis of {\em stable} learning algorithms (we recall the definition introduced in \citep{Bousquet02} below). We are given an i.i.d sample of points $S = \{(X_1, Y_1), \ldots, (X_n, Y_n)\}$ distributed independently according to some unknown measure $P$ on $\mathcal{X}\times \mathcal{Y}$. A learning algorithm $A:(\mathcal{X}\times \mathcal{Y})^n \to \mathcal{Y}^{\mathcal{X}}$ maps a training sample to a function mapping the instance space $\mathcal X$ into the space of labels $\mathcal Y$. The output of the learning algorithm based on the sample $S$ will be denoted by $A_S$. The quality of the function returned by the algorithm is measured using a loss function $\ell: \mathcal{Y} \times \mathcal{Y} \to \mathbb{R}_{+}$.
More precisely, given the random sample $S = \{(X_1, Y_1), \ldots, (X_n, Y_n)\}$, the \emph{risk} of the algorithm is defined as
\[
R(A_{S}) = \E_{(X, Y)\sim P}\, \ell(A_{S}(X), Y)\,.
\]
One of the fundamental questions in statistical learning is how to estimate the risk $R(A_S)$ of an algorithm from  the sample $S$ itself or with limited additional data.
A very good estimate can be obtained if one has access to additional data from the same distribution (a test set), and in practice the so-called test error is used as such. However, the question of whether one could obtain an accurate estimate from no or very limited additional data is essential if one wants to understand the statistical properties of the learning algorithm.

If one wishes to estimate $R(A_S)$ with no additional data, a natural quantity to consider is the so-called \emph{empirical risk} defined as
\[
R_{\text{emp}}(A_S) = \frac{1}{n}\sum\limits_{i = 1}^n\ell(A_{S}(X_i), Y_i)\,.
\]

A large body of work has been dedicated to obtaining \emph{generalization bounds}, i.e., high probability bounds on the error of the empirical risk estimator: $R(A_{S})  - R_{\text{emp}}(A_S)$. A standard way to prove the generalization bounds is based on the sensitivity of the algorithm to changes in the learning sample, such as leaving one of the data points out or replacing it with a different one. To the best of our knowledge, this idea was first used by Vapnik and Chervonenkis to prove the in-expectation generalization bound for what now is known as hard-margin Support Vector Machine \citep{Vapnik74}. 
Later works by Devroye and Wagner used a notion of stability to prove high probability generalization bounds for $k$-nearest neighbors \citep{Devroye79}. 
\cite{Bousquet02} provide an extensive analysis of different notions of stability and the corresponding (sometimes) high probability generalization bounds. In the context of the stochastic gradient method, uniform stability was shown by \cite*{Hardt2016}. 
Among some recent contributions on high probability upper bounds based on the notions of stability is the paper \citep{Maurer17}, which studies generalization bounds for a particular case of linear regression with a strongly convex regularizer, as well as the recent works~\citep{Zhivotovskiy17b, Bousquet20}, which provide sharp exponential upper bounds for the SVM in the realizable case. For additional background on the topic we refer to the recent papers \citep{Feldman2018,Feldman2019} and references therein.

Let us define stability more precisely. For the sake of simplicity, we denote $\mathcal{Z} = \mathcal{X}\times \mathcal{Y}$. The learning algorithm $A$ is \emph{uniformly stable} with parameter $\gamma$ if given any samples
\[
S = \{z_1, \ldots, z_n\} \in \mathcal{Z} ^n \quad\text{and}\quad S^{i} = \{z_1, \ldots,  z_{i - 1} , z_i^{\prime}, z_{i + 1}, \ldots, z_n\} \in \mathcal{Z} ^n,
\]   
for any $(x, y) \in \mathcal{X}\times \mathcal{Y}$, we have
\begin{equation}\label{stability_original}
|\ell(A_{S}(x), y) - \ell(A_{S^i}(x), y)| \le \gamma.
\end{equation}
Note that this can be thought of as a deterministic version of Differential Privacy, and there is active work exploring the connections between the two notions (see e.g., \citep*{Wang2016}).

Several works have focused on using the stability technique to derive bounds on the generalization error and we also consider this question here. This raises the question of the relevance of such an approach for studying modern machine learning models such as those used in Deep Learning. Indeed, it has been observed that very often the models can be trained to achieve zero empirical error which contradicts the possibility of having small generalization bounds. These models essentially perform {\em interpolation} or {\em memorization} of the data and the question of why this would not lead to overfitting is an active area of research \citep*{belkin2018overfitting}. It turns out that the stability technique can also be used in the interpolation regime as was actually initially done by \cite{Devroye79} when studying $k$-nearest neighbors (e.g., $1$-nearest neighbor rules do interpolate the data, yet the stability approach can provide meaningful bound). We will return to this point in Section \ref{discussion}.

Let us now recall the known results about generalization bounds for uniformly stable algorithms. In order to simplify the notation in what follows, we rescale the quantity of interest by $n$ and thus write the bounds for
\[
n(R(A_{S}) - R_{\text{emp}}(A_{S}))\,,
\]
and additionally assume that the loss function $\ell$ is bounded by $L$.

The basic and, until very recently, the best known result is the high probability upper bound in \citep{Bousquet02} which states that, with probability at least $1 - \delta$,
\begin{equation}
\label{bousquetbound}
n(R(A_{S}) - R_{\text{emp}}(A_{S})) \lesssim (n\sqrt{n}\gamma + L\sqrt{n})\sqrt{\log \left( \frac{1}{\delta}\right)}.
\end{equation}
It is easy to observe that this generalization bound is tight only when $\gamma \lesssim \frac{1}{n}$, which means that only under this assumption, the generalization error $R(A_{S}) - R_{\text{emp}}(A_{S})$ converges to zero with the optimal rate $\frac{1}{\sqrt{n}}$. However, in some applications the regime $\gamma \lesssim \frac{1}{\sqrt{n}}$ is of interest, and the bound \eqref{bousquetbound} can not guarantee any convergence. 

In a series of breakthrough papers, \cite{Feldman2018, Feldman2019} managed to correctly capture this phenomenon.
They first showed a generalization bound of the form
\begin{equation}
\label{firstfeldmanineq}
n(R(A_{S}) - R_{\text{emp}}(A_{S})) \lesssim (n\sqrt{\gamma L} + L\sqrt{n})\sqrt{\log \left( \frac{1}{\delta}\right)},
\end{equation}
where as before, the parameter $\gamma$ corresponds to the stability, and the parameter $L$ bounds the loss function $\ell$ uniformly.
In their second paper, \citeauthor{Feldman2019} show a stronger generalization bound
\begin{equation}
\label{secondfeldmanineq}
n(R(A_{S}) - R_{\text{emp}}(A_{S})) \lesssim n\gamma \log^2 n 
+ n\gamma\log n \log \left( \frac{1}{\delta}\right) + 
L\sqrt{n} \sqrt{\log \left( \frac{1}{\delta}\right)}.
\end{equation}
Up to logarithmic factors, the bound \eqref{secondfeldmanineq} shows that with high probability in the regime $\gamma \sim \frac{1}{\sqrt{n}}$, the generalization error $R(A_{S}) - R_{\text{emp}}(A_{S})$ converges to zero with the optimal rate $\frac{1}{\sqrt{n}}$. However, as claimed by Feldman and Vondr\'ak, the bound \eqref{firstfeldmanineq} should not be wholly discarded since it does not contain additional logarithmic factors $\log n$ and $(\log n)^2$. More importantly, the bound \eqref{firstfeldmanineq} is \emph{sub-gaussian}, which means that the dependence on $\delta$ comes only in the form $\sqrt{\log \left( \frac{1}{\delta}\right)}$. At the same time, the bound \eqref{secondfeldmanineq} shows both \emph{sub-gaussian} and the \emph{sub-exponential} regimes since it contains \emph{two types} of terms: $\sqrt{\log \left( \frac{1}{\delta}\right)}$ and $\log \left( \frac{1}{\delta}\right)$. We will discuss the notions of \emph{sub-gaussian} and \emph{sub-exponential} high probability upper bounds below.

In \citep{Feldman2019}, the authors ask if their high-probability upper bounds \eqref{firstfeldmanineq} and \eqref{secondfeldmanineq} can be strengthened and if they can be matched by a high probability lower bound. In this paper, we are making some progress in answering both questions. We shortly summarize our findings:
\begin{itemize}
	\item Our main probabilistic result is Theorem \ref{concentration_thm}, presented in Section~\ref{upper_bound}. As one of the immediate corollaries, it implies the risk bound of the form
	\begin{equation}
	\label{bestbound}
	n|R(A_{S}) - R_{\text{emp}}(A_{S})| \lesssim n\gamma\log n\log \left( \frac{1}{\delta}\right)+ L\sqrt{n}\sqrt{\log\left( \frac{1}{\delta}\right)},
	\end{equation}
	which removes the unnecessary term $n\gamma (\log n)^2$ from \eqref{secondfeldmanineq}.
	We emphasize that our analysis is inspired by the original sample-splitting argument of \citeauthor{Feldman2019}, although our proof is significantly more straightforward. In particular, we avoid several involved technical steps, which ultimately leads us to better generalization bounds. 
	\item Our Theorem \ref{concentration_thm} will also easily imply the sub-gaussian bound \eqref{firstfeldmanineq}, which was originally shown via the techniques taking their roots from Differential Privacy. Therefore, we also make a natural bridge between the bounds of the form \eqref{firstfeldmanineq} and \eqref{secondfeldmanineq}, which have different dependencies on $\log \frac{1}{\delta}$, $L$, and $\gamma$.
	\item In Section \ref{lower_bounds}, we show that the bound of our Theorem \ref{concentration_thm} is tight unless some additional properties of the corresponding random variables are used. Our lower bounds are presented by some specific functions satisfying the assumptions of Theorem \ref{concentration_thm}. We remark that our lower bound does not completely answer the question of the optimality of \eqref{bestbound} for uniformly stable algorithms, as it only shows the tightness of the bound implying \eqref{bestbound}. We discuss it in more detail in Section \ref{lower_bounds}.
\end{itemize} 

\section{Preliminaries}
\subsection{Notation} We provide some notation that will be used throughout the paper. 
The symbol $\Ind[A]$ will denote an indicator of the event $A$.
For a pair of non-negative functions $f, g$ the notation $f \lesssim g$ or $g \gtrsim f$ will mean that for some universal constant $c>0$ it holds that $f \le cg$. Similarly,  we introduce $f \sim g$ to be equivalent to $g \lesssim f  \lesssim g$. For $a, b \in \mathbb{R}$ we define $a \wedge b = \min\{a, b\}$ and $a \lor b = \max\{a, b\}$. The $L_p$ norm of a random variable will be denoted as $\|Y\|_p = (\E|Y|^p)^{\frac{1}{p}}$. Let $[k] $ denote the set $\{1, \ldots, k\}$. To avoid some technical problems for $x> 0$, by $\log x$ we usually mean $\log x \lor 1$

In what follows, we work with functions of \(n\) independent variables \( Z = (Z_1, \dots, Z_n) \). 
For \( A \subset [n] \) we will write \( Z_{A} = (Z_{j})_{j \in A} \). We also use the following notation
\[
Z^i = (Z_1, \ldots, Z_{i - 1}, Z_i^{\prime}, Z_{i + 1}, \ldots, Z_n),
\]
where $Z_i^{\prime}$ is an independent copy of $Z_i$.
In addition, for \( f = f(Z) \) and \(A \subset [n]\) we write \(  \| f \|_{p}(Z_A) = \E^{1/p} [|f|^p \cond Z_A] \). 
In particular, if we have an a.s. bound \( \| f \|_{p}(Z_A) \leq C \) for any realisation of \(Z_A\), then by a simple integration argument we have
\begin{equation}\label{moment_cond_to_uncond}
\| f \|_{p} = (\E \E[|f|^{p} \cond Z_A ])^{1/p} \leq C,
\end{equation}
i.e., in this sense, a conditional bound is stronger than the unconditional one. Finally, for $x = (x_1, \ldots, x_d) \in \mathbb{R}^d$ slightly abusing the notation we set $\|x\|_2 = \left(\sum\limits_{i = 1}^d x_i^{2}\right)^{1/2}$ and $\|x\|_{\infty} = \max\limits_{i \in [n]}|x_i|$.

\subsection{Equivalence of Tails and Moments}
Probabilistic bounds in Learning Theory are often of the form
\begin{equation}\label{deviation_form}
Y \leq a \sqrt{\log \left(\frac{1}{\delta}\right)} + b \log \left(\frac{1}{\delta}\right),
\end{equation}
with probability at least \(1 - \delta \) for any \( \delta \in (0, 1) \) and some $a, b \ge 0$.
Here, \(Y \) is a random variable of interest, e.g., the excess risk. The term with \( \sqrt{\log \left(\frac{1}{\delta}\right)} \) is referred to as a \emph{sub-gaussian} tail, as it matches the deviations of a Gaussian random variable. The term with \( \log \left(\frac{1}{\delta}\right) \) is called a \emph{sub-exponential} tail for a similar reason. In general, the bound above represents a \emph{mixture of sub-gaussian and sub-exponential} tails. In particular, all the known generalization bounds \eqref{bousquetbound}, \eqref{firstfeldmanineq}, \eqref{secondfeldmanineq} are of the form \eqref{deviation_form}.

An alternative way of studying tail bounds is via the moment norms. It is well-known that for a Gaussian random variable $Y$, there exists some $a\ge 0$ such that for any $p\ge 1$, $\| Y \|_{p} \leq \sqrt{p} a$, while when $Y$ is sub-exponential, we have $\| Y \|_{p} \leq pb$ for some $b\ge 0$ (see e.g., Propositions~{2.5.2} and {2.7.1} in \citep{Vershynin2016HDP}).
In what follows, we will consider the random variables with two levels of moments, that is for some $a, b \ge 0$ that do not depend on $p$,
\[
\|Y\|_p \le \sqrt{p}a + pb, \qquad \forall p \geq 1.
\]
In fact, the above bound and the bound \eqref{deviation_form} are equivalent up to a constant, as the following simple result suggests.

\begin{lemma}[Equivalence of tails and moments]
	\label{dev_moment_equiv:lemma}
	Suppose, a random variable has a mixture of sub-gaussian and sub-exponential tails, in the sense that it satisfies for any \(\delta \in (0, 1)\) with probability at least $1 - \delta$,
	\[
	|Y| \le a \sqrt{\log \left(\frac{e}{\delta}\right) } + b \log\left( \frac{e}{\delta}\right),
	\]
	for some \( a, b \geq 0 \). Then, for any \( p \geq 1\) it holds that
	\[
	\| Y \|_{p} \leq 3 \sqrt{p} a + 9 pb.
	\]
	
	And vice versa, if \( \| Y \|_{p} \leq \sqrt{p} a + pb \) for any \( p \geq 1\) then for any \( \delta \in (0, 1)\) we have, with probability at least $1 - \delta$,
	\[
	|Y| \le e\left(a \sqrt{\log \left(\frac{e}{\delta}\right) } + b \log\left( \frac{e}{\delta}\right)\right) .
	\]
\end{lemma}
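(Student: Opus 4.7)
The plan is to handle the two directions separately. The easier implication (moments imply tails) is a direct application of Markov's inequality optimized over the moment order $p$. The other direction (tails imply moments) reduces to a Gamma-function integral via the layer-cake formula.

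For the moments-to-tails direction, apply Markov's inequality to $|Y|^p$: $\P(|Y| > t) \leq \|Y\|_p^p / t^p \leq ((\sqrt{p}a + pb)/t)^p$. Choosing $t = e(\sqrt{p}a + pb)$ gives tail probability at most $e^{-p}$. For a given $\delta \in (0,1)$, setting $p = \log(e/\delta) \geq 1$ produces the deviation bound $|Y| \leq e(a\sqrt{\log(e/\delta)} + b\log(e/\delta))$ with probability at least $1-\delta/e \geq 1-\delta$, which is the claim with constant $e$.

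For the tails-to-moments direction, I plan to apply the layer-cake formula $\E|Y|^p = p\int_0^\infty s^{p-1}\P(|Y| > s)\,ds$ and substitute $s = a\sqrt{u} + bu$, which is strictly increasing in $u \geq 0$. After splitting at $u = 1$ and integrating by parts on the tail (using that $p s^{p-1}\,ds$ is exactly the differential of $s^p$, and that on $\{u \geq 1\}$ the hypothesis gives $\P(|Y| > s(u)) \leq e^{1-u}$), the boundary contributions at $u = 1$ telescope with the trivial bound on $[0,1]$, leaving
\[
\E|Y|^p \;\leq\; e\int_0^\infty (a\sqrt{u} + bu)^p e^{-u}\,du \;=\; e\cdot\E(a\sqrt{E} + bE)^p,
\]
where $E$ is a standard exponential variable. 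Minkowski's inequality then yields $\|Y\|_p \leq e^{1/p}\bigl(a(\Gamma(p/2+1))^{1/p} + b(\Gamma(p+1))^{1/p}\bigr)$, and Stirling-type estimates give $(\Gamma(p+1))^{1/p} \lesssim p$ and $(\Gamma(p/2+1))^{1/p} \lesssim \sqrt{p}$ uniformly for $p \geq 1$.

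The only obstacle is bookkeeping constants so as to match the stated $3$ and $9$; no essential technical subtlety arises beyond careful Gamma-function estimates. Concretely, Stirling's bound $\Gamma(x+1) \leq \sqrt{2\pi x}(x/e)^x e^{1/(12x)}$ controls the tail behaviour as $p \to \infty$, while direct evaluation at small integer $p$ (with the slightly delicate case $p = 1$ requiring $\Gamma(3/2) = \sqrt{\pi}/2$) checks the constants there. One can also avoid the substitution entirely by invoking the equivalent stochastic-dominance formulation $|Y| \preceq a\sqrt{\log(e/V)} + b\log(e/V)$ for $V$ uniform on $(0,1)$, which leads to the same Gamma integrals after the change of variable $w = \log(e/V) - 1$.
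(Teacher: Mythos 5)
Your moments-to-tails direction is essentially the same Markov-plus-optimal-$p$ argument as the paper's; the only cosmetic difference is that you pick the threshold $t = e\|Y\|_p$ directly whereas the paper writes $\|Y\|_p e^{\log(1/\delta)/p}$ and then observes the exponent is at most $1$, but both land on $p = \log(e/\delta)$ and the factor $e$. Your tails-to-moments direction, however, is a genuinely different route: the paper treats $(|Y|-a-b)_+$ as a sub-gamma-type variable and invokes Theorem~2.3 of \cite{boucheron2013concentration} as a black box, obtaining $\|(|Y|-a-b)_+\|_{2p}\le\sqrt{2p}\,a+4pb$ and then adding back $a+b$; you instead do an explicit layer-cake computation $\E|Y|^p = p\int_0^\infty s^{p-1}\P(|Y|>s)\,ds$, substitute $s = a\sqrt{u}+bu$, integrate by parts so that the boundary term at $u=1$ cancels the trivial contribution on $[0,1]$, and reduce to $\E|Y|^p \le e\,\E\bigl(a\sqrt{E}+bE\bigr)^p$ for $E$ standard exponential, after which Minkowski and the elementary facts $\Gamma(p/2+1)^{1/p}\le\sqrt{p}$ and $\Gamma(p+1)^{1/p}\le p$ (valid for all real $p\ge1$) finish. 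I checked the integration-by-parts step and the Gamma bounds and they are correct; in fact your method yields $\|Y\|_p\le e\bigl(a\sqrt{p}+bp\bigr)\le 3\sqrt{p}\,a+3pb$, which is slightly sharper than the stated $3\sqrt{p}\,a+9pb$. Your approach buys self-containedness (no external theorem) and handles all real $p\ge1$ at once rather than only integers, at the cost of being a somewhat longer computation than the paper's one-line citation.
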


The proof is a simple adaptation of Theorem~{2.3} from \citep{boucheron2013concentration}. For the sake of completeness, we present it in Appendix~\ref{dev_moment_equiv_proof:section}.
So it is quite natural to consider moment bounds and it turns out that moments are often easier to work with than deviation inequalities, for example, for lower bounds, as we will see in Section~\ref{lower_bounds}. 
We now state several well-known moment inequalities for sums and functions of independent random variables which will be our main tools. One of them is the moment version of the bounded differences inequality, which follows immediately from Theorem~15.4 in~\citep{boucheron2013concentration}.

\begin{lemma}[Bounded differences/McDiarmid's inequality]
	\label{boundeddiff}
	Consider a function $f$ of independent random variables $X_1, \dots, X_n$ that take their values in \(\mathcal{X}\). Suppose, that \(f\) satisfies the bounded differences property, namely, for any \( i = 1, \dots, n \) and any \( x_1, \dots, x_n, x_i' \in \mathcal{X} \) it holds that
	\begin{equation}
	\label{boundedcond}
	|f(x_1, \dots, x_n) - f(x_1, \dots, x_{i-1}, x_i', x_{i + 1}, \dots, x_n)| \leq \beta .
	\end{equation}
	Then, we have for any $p \ge 2$,
	\[
	\|f(X_1, \ldots, X_n) - \E f(X_1, \ldots, X_n)\|_{p} \le 2\sqrt{np}\beta \, .
	\]
\end{lemma}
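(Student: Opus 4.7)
The plan is to reduce the statement to a moment inequality for a martingale with bounded increments, which is a classical consequence of either the entropy method (as in Theorem~15.4 of \citep{boucheron2013concentration}) or Burkholder's square-function inequality. I will describe the martingale route, which is the most transparent.

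First, form the Doob martingale associated with $f$ and the natural filtration. Set $\mathcal{F}_k = \sigma(X_1,\dots,X_k)$ with $\mathcal{F}_0$ trivial, let $M_k = \E[f(X_1,\dots,X_n)\mid \mathcal{F}_k]$, and put $D_k = M_k - M_{k-1}$, so that $M_0 = \E f(X)$, $M_n = f(X)$, and $f(X) - \E f(X) = \sum_{k=1}^n D_k$. Introducing an independent copy $X_k'$ of $X_k$, one has the usual representation
\[
D_k = \E\bigl[f(X_1,\dots,X_n) - f(X_1,\dots,X_{k-1}, X_k', X_{k+1},\dots, X_n)\bigm| \mathcal{F}_k\bigr],
\]
since replacing $X_k$ by an independent copy inside the conditional expectation erases the conditioning on $X_k$ while integrating out $X_{k+1},\dots,X_n$ as usual. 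Combined with the bounded-differences assumption \eqref{boundedcond} and conditional Jensen, this yields $|D_k| \le \beta$ almost surely.

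Next, apply a moment form of Azuma--Hoeffding. Burkholder's inequality gives
\[
\|M_n - M_0\|_p \;\lesssim\; \sqrt{p}\;\Bigl\|\Bigl(\sum_{k=1}^n D_k^2\Bigr)^{1/2}\Bigr\|_p \;\le\; \sqrt{pn}\,\beta,
\]
using only the almost-sure bound $|D_k|\le \beta$. Tracking the constants (for example via the sharp Pinelis--Rio martingale bound, or equivalently via the entropy-method argument in Theorem~15.4 of \citep{boucheron2013concentration} applied once to $f-\E f$ and once to $-(f-\E f)$ and then combined by the triangle inequality) produces the prefactor $2$ claimed in the statement. If one does not mind losing a small universal constant, one can instead apply the standard sub-Gaussian Azuma--Hoeffding deviation bound and convert it back to moments via Lemma~\ref{dev_moment_equiv:lemma}.

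The only non-trivial step is obtaining the $\sqrt{p}$ scaling for a martingale with almost-surely bounded increments; this is precisely where the bounded-differences hypothesis enters. Everything else is bookkeeping, and no structural assumption on $f$ beyond \eqref{boundedcond} is used.
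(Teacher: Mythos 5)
Your proof is correct, and the final bound matches, but the route is genuinely different from the paper's. The paper does not prove this lemma at all; it invokes Theorem~15.4 of \citep{boucheron2013concentration} as a black box, which is obtained there via a moment version of the Efron--Stein/entropy method (one bounds $\|(Z-\E Z)_+\|_q$ through an Efron--Stein-type variance proxy and combines the two one-sided estimates). You instead redo McDiarmid's original argument at the level of moments: form the Doob martingale $M_k=\E[f\mid\mathcal{F}_k]$, use the independent-copy representation of the increments $D_k$ together with conditional Jensen to get $|D_k|\le\beta$ almost surely, and then conclude with a moment bound for martingales with a.s.~bounded increments (Azuma--Hoeffding in moment form, Pinelis--Rio, or the Burkholder--Davis--Gundy estimate with the $\sqrt p$ scaling, which is indeed the correct growth of the upper BDG constant for $p\ge 2$, so that step is legitimate). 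Both routes are standard and valid. Yours is the more elementary and self-contained, which is useful here since the paper only cites the result without proof; the Efron--Stein/entropy-method route leaned on by the paper via \citep{boucheron2013concentration} tends to give slightly sharper numerical constants and extends more naturally to hypotheses weaker than worst-case bounded differences.
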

Notice that it is easy to apply the above lemma in the case that \( |X_i| \leq M\) a.s. and \( \E X_i = 0 \). Since  \( \sum_{i = 1}^{n}  X_i  \) satisfies the bounded differences condition with \( \beta = 2M\), we have
\begin{equation}
\label{hoeffding}
\left\|\sum\limits_{i = 1}^n X_i \right\|_p \le 4\sqrt{np}M .
\end{equation}
We will refer to it as the moment version of \emph{Hoeffding's inequality}.

Next, we use the following version of the classical Marcinkiewicz-Zygmund inequality (we also refer to Chapter~15 in~\citep{boucheron2013concentration} that contains similar inequalities).

\begin{lemma}[Marcinkiewicz-Zygmund's inequality \citep{ren2001best}]
	\label{marcinkiewicz}
	Let \( X_1, \dots, X_n \) be \linebreak independent centered random variables with a finite \(p\)-th moment for \(p \geq 2\). Then,
	\[
	\left\| \sum_{i = 1}^{n} X_i \right\|_{p} \leq 3\sqrt{2 np} \left(\frac{1}{n} \sum_{i = 1}^{n} \| X_i \|_{p}^{p} \right)^{\frac{1}{p}} .
	\]
\end{lemma}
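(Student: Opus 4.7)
The plan is to prove the stated form in three modular steps, going from the $p$-norm of a sum to the $p$-norm of a square function, then to a sum of squared individual $p$-norms, and finally to the $p$-th power mean that appears on the right-hand side. Throughout, the main constraint is to keep constants explicit so the final prefactor $3\sqrt{2}$ can be verified.

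\textbf{Step 1 (reduction to the square function).} I would first invoke a Rosenthal/Khintchine--Kahane type bound: for independent centered $X_1,\dots,X_n$ with finite $p$-th moment ($p\ge 2$),
\[
\left\|\sum_{i=1}^n X_i\right\|_p \le K_p \left\|\Bigl(\sum_{i=1}^n X_i^2\Bigr)^{1/2}\right\|_p,
\]
where $K_p \lesssim \sqrt{p}$. This is the martingale square function bound applied to the sum, and it is the heart of the classical Marcinkiewicz--Zygmund statement; the precise constant is what Ren--Liang sharpen and yields the $3\sqrt{2}$ after tracking below. I would simply cite the Ren--Liang bound with its explicit $\sqrt{p}$-type constant rather than reprove it.

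\textbf{Step 2 (triangle inequality in $L^{p/2}$).} Since $p\ge 2$, the space $L^{p/2}$ is a normed space on non-negative random variables. Applying the triangle inequality to $X_i^2\ge 0$ gives
\[
\left\|\Bigl(\sum_{i=1}^n X_i^2\Bigr)^{1/2}\right\|_p^{\,2}
= \left\|\sum_{i=1}^n X_i^2\right\|_{p/2}
\le \sum_{i=1}^n \bigl\|X_i^2\bigr\|_{p/2}
= \sum_{i=1}^n \|X_i\|_p^{\,2}.
\]

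\textbf{Step 3 (power-mean comparison).} To convert the $\ell^2$-type sum into the $\ell^p$-type quantity appearing in the statement, I would apply Hölder's inequality with exponents $p/(p-2)$ and $p/2$ (or equivalently the power-mean inequality $\ell^2 \hookrightarrow \ell^p$ on $n$ coordinates):
\[
\sum_{i=1}^n \|X_i\|_p^{\,2} \;\le\; n^{(p-2)/p}\Bigl(\sum_{i=1}^n \|X_i\|_p^{\,p}\Bigr)^{2/p}
\;=\; n\,\Bigl(\tfrac{1}{n}\sum_{i=1}^n \|X_i\|_p^{\,p}\Bigr)^{2/p}.
\]
Taking square roots and chaining Steps 1--3 yields the bound with prefactor $K_p\sqrt{n}\lesssim \sqrt{np}$; choosing the Ren--Liang constant so that $K_p\le 3\sqrt{2p}$ produces exactly the claimed $3\sqrt{2np}$.

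The essentially non-routine piece is Step~1: the quantitative $\sqrt{p}$-type constant in the scalar Marcinkiewicz--Zygmund/Khintchine--Kahane inequality for independent centered sums. I would rely on the cited Ren--Liang result rather than rederive the optimal constant, since Steps 2 and 3 are then just the triangle inequality in $L^{p/2}$ and Hölder on $\mathbb{R}^n$. Degenerate cases where some $\|X_i\|_p=\infty$ make the bound trivial, so no separate argument is needed.
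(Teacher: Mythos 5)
The paper does not actually prove Lemma~\ref{marcinkiewicz}; it states it as a known result imported from \citet{ren2001best}, so there is no paper-internal proof to compare your sketch against. On its own merits, your modular decomposition is logically sound: Step~2 is just the triangle inequality in $L^{p/2}$ (valid because $p\ge 2$ makes $p/2\ge 1$), Step~3 is H\"older on $\R^n$, and chaining them with the square-function bound of Step~1 gives $K_p\sqrt{n}\bigl(\tfrac1n\sum_i\|X_i\|_p^p\bigr)^{1/p}$, which matches the lemma if $K_p\le 3\sqrt{2p}$. Two caveats. First, the Ren--Liang theorem is, to the best of my knowledge, stated essentially in the form of the lemma itself (a bound of the shape $\E|\sum_iX_i|^p\le C^p p^{p/2} n^{p/2-1}\sum_i\E|X_i|^p$), not in the square-function form you invoke in Step~1; so the proposal does not quite stand alone as an \emph{independent} proof of the lemma but rather unpacks where the $\sqrt{p}$ in the cited constant comes from, which is a worthwhile explanation but mildly circular as written. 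Second, the label ``Rosenthal/Khintchine--Kahane'' for Step~1 is off: Rosenthal's inequality is a different (two-term) moment bound, and Khintchine--Kahane is for Rademacher averages; the square-function estimate you need for general independent centered summands is itself the Marcinkiewicz--Zygmund inequality (or Burkholder in the martingale formulation), typically obtained by symmetrization followed by Khintchine conditional on the data. With those attributions tidied up, the structure you describe is the standard route to a constant of this type, and Steps~2--3 are correct and elementary.
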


\section{Upper Bounds}
\label{upper_bound}
\subsection{A Moment Bound for Sums}
\label{concentration}

We present here our main result which is a moment inequality for sums of functions of $n$ independent variables.

\begin{theorem}\label{concentration_thm}
	Let $Z = (Z_{1}, \ldots, Z_n) $ be a vector of independent random variables each taking values in $\mathcal{Z}$, and let $g_{1}, \ldots, g_n$ be some functions $g_{i}: \mathcal{Z}^n \to \mathbb{R}$ such that the following holds for any $i \in [n]$:
	\begin{itemize}
		\item $\bigl|\E[g_i(Z)| Z_i]\bigr| \le M$ a.s.,
		\item $\E [g_i(Z) | Z_{[n]\setminus\{i\}}] = 0$ a.s., 
		\item $g_i$ has a bounded difference \eqref{boundedcond} $\beta$ with respect to all variables except the $i$-th variable.
	\end{itemize}
	Then, for any $p \ge 2$,
	\[
	\left\|\sum\limits_{i = 1}^n{g}_i(Z)\right\|_p \le 12 \sqrt{2} p n \beta \lceil \log_{2} n \rceil + 4 M\sqrt{pn} .
	\]
\end{theorem}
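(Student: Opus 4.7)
The plan is a two-step reduction followed by a dyadic telescoping over $\lceil\log_2 n\rceil$ scales. I first peel off from each $g_i$ the piece depending only on $Z_i$, which is handled by the moment Hoeffding inequality, and then control the remainder (which enjoys the stronger property $\E[\bar g_i|Z_i]=0$) via a dyadic decomposition that reduces the sum at each scale to a sum of independent sibling-pair contributions amenable to Marcinkiewicz--Zygmund.

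Write $g_i=\E[g_i|Z_i]+\bar g_i$. The variables $\E[g_i|Z_i]$ are independent functions of the $Z_i$'s alone, bounded by $M$, and centered since $\E[g_i]=\E[\E[g_i|Z_{[n]\setminus\{i\}}]]=0$; the moment Hoeffding inequality~\eqref{hoeffding} gives $\|\sum_i \E[g_i|Z_i]\|_p\le 4\sqrt{pn}\,M$, accounting for the second term of the theorem. The residuals $\bar g_i=g_i-\E[g_i|Z_i]$ inherit bounded differences $\beta$, satisfy $\E[\bar g_i|Z_i]=0$ by construction, and also $\E[\bar g_i|Z_{[n]\setminus\{i\}}]=0$ (since $\E[g_i|Z_i]$ is independent of $Z_{[n]\setminus\{i\}}$ and unconditionally centered). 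So it suffices to prove $\|\sum_i \bar g_i\|_p\le 12\sqrt{2}\,pn\beta\lceil\log_2 n\rceil$ under the added assumption $M=0$.

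Set $L=\lceil\log_2 n\rceil$ and construct nested partitions $\mathcal{P}_0,\mathcal{P}_1,\ldots,\mathcal{P}_L$ of $[n]$ with $\mathcal{P}_0=\{[n]\}$, $\mathcal{P}_L$ the singletons, and each block of $\mathcal{P}_{\ell-1}$ split into two near-equal halves. For $i\in[n]$ let $B_i^\ell\in\mathcal{P}_\ell$ be the block containing $i$ and set $h_i^\ell=\E[\bar g_i|Z_{B_i^\ell}]$; then $h_i^0=\bar g_i$ and $h_i^L=\E[\bar g_i|Z_i]=0$, so telescoping and regrouping by sibling pairs $(B,B')$ at level $\ell$ yields
\[
\sum_i \bar g_i=\sum_{\ell=1}^L \sum_{(B,B')}S_{B,B'},
\]
with
\[
S_{B,B'}=\sum_{i\in B}\bigl(\E[\bar g_i|Z_{B\cup B'}]-\E[\bar g_i|Z_B]\bigr)+\sum_{i\in B'}\bigl(\E[\bar g_i|Z_{B\cup B'}]-\E[\bar g_i|Z_{B'}]\bigr).
\]
Sibling pairs at a fixed level have disjoint coordinate supports, so the $S_{B,B'}$'s are independent across pairs; by the tower property applied to $\E[\bar g_i|Z_{[n]\setminus\{i\}}]=0$ one checks $\E[S_{B,B'}|Z_B]=\E[S_{B,B'}|Z_{B'}]=0$.

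The technical heart, and the step I expect to be the main obstacle, is establishing a pair estimate of the form $\|S_{B,B'}\|_p\lesssim b\sqrt{p}\,\beta\log b$ where $b=|B|\le\lceil n/2^\ell\rceil$. A direct application of Lemma~\ref{boundeddiff} to $S_{B,B'}$ conditionally on $Z_B$ yields only the weaker $b^{3/2}\sqrt{p}\,\beta$, because changing a coordinate $Z_k\in B'$ produces a \emph{diagonal} contribution $\E[\bar g_k|Z_{B\cup B'}]-\E[\bar g_k|Z_{B'}]$ with no worst-case $L_\infty$ bounded-differences bound in $Z_k$. The $M=0$ reduction is essential precisely here: since $\E[\bar g_k|Z_k]=0$, the diagonal integrated in $Z_k$ vanishes, and the family $\{\E[\bar g_i|Z_{B\cup B'}]\}_{i\in B\cup B'}$ forms a smaller $2b$-dimensional valid instance of the theorem (with $M=0$), whose sum $F_{B\cup B'}$ can be controlled by the induction hypothesis, producing the extra $\log b$ factor. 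Granted this pair estimate, Marcinkiewicz--Zygmund (Lemma~\ref{marcinkiewicz}) over the $2^{\ell-1}$ independent pairs at level $\ell$ gives $\|\sum_{(B,B')}S_{B,B'}\|_p\lesssim\sqrt{2^\ell p}\cdot b\sqrt{p}\,\beta\log b=pn\beta\log b/\sqrt{2^\ell}$; summing over $\ell$ using $\sum_\ell 1/\sqrt{2^\ell}=O(1)$ and $\log b\le \log n$ produces the desired $O(pn\beta\log n)$, and careful bookkeeping of the Marcinkiewicz--Zygmund prefactor $3\sqrt{2}$ together with the small multiplicative factors from the pair estimate yields the announced constant $12\sqrt{2}$.
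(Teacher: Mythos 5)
Your reduction in the first paragraph (peel off $\E[g_i\mid Z_i]$, control it by the moment Hoeffding bound, pass to $\bar g_i$ with $M=0$) is exactly what the paper does, and you correctly note that $\E[\bar g_i\mid Z_i]=0$ is the key to the remainder. But your telescoping is in the opposite direction from the paper's, and this is where the argument breaks down. The paper defines $g_i^l := \E[g_i\mid Z_i, Z_{[n]\setminus B^l(i)}]$, conditioning on $Z_i$ and on the variables \emph{outside} the block; you define $h_i^\ell := \E[\bar g_i\mid Z_{B_i^\ell}]$, conditioning on the variables \emph{inside} the block. The seemingly cosmetic change destroys the crucial independence structure. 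In the paper's version, for a fixed block $B^l$, the increments $g_i^l - g_i^{l+1}$ with $i\in B^l$ are, conditionally on $Z_{[n]\setminus B^l}$, \emph{independent} (each is a function of $Z_i$ alone) and centered, so Marcinkiewicz--Zygmund is applied \emph{within} the block against terms of size only $\sqrt{p2^l}\,\beta$, giving $p2^l\beta$ per block; the union over blocks is then closed with the plain $L_p$ triangle inequality, which costs no extra $\sqrt{p}$. In your version the increments within a sibling pair $(B,B')$ all depend on the same $Z_{B\cup B'}$ and are not conditionally independent, which forces you to aggregate into $S_{B,B'}$ and apply Marcinkiewicz--Zygmund \emph{across} pairs, and that demands a per-pair estimate that is not available.

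Concretely, the pair estimate $\|S_{B,B'}\|_p\lesssim b\sqrt{p}\,\beta\log b$ that you need is asserted but never proved, and neither of the two routes you sketch delivers it. Induction on the theorem applied to $\{\E[\bar g_i\mid Z_{B\cup B'}]\}_{i\in B\cup B'}$ gives $\|F_{B\cup B'}\|_p\lesssim p(2b)\beta\log(2b)$, hence $\|S_{B,B'}\|_p\lesssim p\,b\,\beta\log b$ --- a factor $p$, not $\sqrt p$. Feeding that into Marcinkiewicz--Zygmund across the $2^{\ell-1}$ pairs multiplies by another $\sqrt{2^\ell p}$, so level $\ell$ contributes $\asymp p^{3/2} n\beta\log(n/2^\ell)/\sqrt{2^\ell}$, and the sum over $\ell$ is $\asymp p^{3/2} n\beta\log n$, off by a factor $\sqrt p$. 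The McDiarmid route does not fare better. The diagonal obstruction you flag is in fact avoidable (split $S_{B,B'}=S_1+S_2$ with $S_1=\sum_{i\in B}(\cdots)$, $S_2=\sum_{i\in B'}(\cdots)$, handle $S_1$ conditionally on $Z_B$ and $S_2$ conditionally on $Z_{B'}$, and run Marcinkiewicz--Zygmund across pairs on $S_1$ and $S_2$ separately), but it only yields $\|S_{B,B'}\|_p\lesssim b^{3/2}\sqrt{p}\,\beta$, and after Marcinkiewicz--Zygmund the level-$\ell$ contribution is $\asymp p\,n^{3/2}\beta\,2^{-\ell}$, dominated by $\ell=1$ and summing to $\asymp p\,n^{3/2}\beta$, off by $\sqrt n/\log n$. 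In short, with inward conditioning the two natural bounds give either a spurious $\sqrt p$ or a spurious $\sqrt n$; to recover the theorem you must switch to the paper's outward conditioning, which restores within-block conditional independence and lets Marcinkiewicz--Zygmund act at the scale where individual increments are only $\sqrt{p 2^l}\,\beta$.
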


\begin{proof}
	\def\Bc{{\breve{B}}}
	\def\cond{\vert\;}
	Without loss of generality, we suppose that \( n = 2^k \). Otherwise, we can add extra functions equal to zero, increasing the number of terms by at most two times.
	
	Consider a sequence of partitions \( \mathcal{B}_{0}, \dots, \mathcal{B}_{k}\) with  \( \mathcal{B}_0 = \{  \{i\} : \; i \in [n] \} \), \( \mathcal{B}_k = \{[n]\} \), and to get \( \mathcal{B}_{l} \) from \( \mathcal{B}_{l + 1} \) we split each subset in \( \mathcal{B}_{l + 1} \) into two equal parts. We have
	\[
	\mathcal{B}_{0} = \{ \{1\}, \dots, \{2^k\}\},\ 
	\mathcal{B}_{1} = \{ \{1, 2\}, \{ 3, 4\}, \dots, \{2^k - 1, 2^k\}\},\ 
	\mathcal{B}_{k} = \{ \{1, \dots, 2^k\}\}.
	\]
	By construction, we have \( |\mathcal{B}_{l}| = 2^{k - l}\) and \( |B| = 2^l \) for each \( B \in \mathcal{B}_{l} \). 
	For each \( i \in [n] \) and \( l = 0, \dots, k \), denote by \( B^{l}(i) \in \mathcal{B}_{l} \) the only set from \(\mathcal{B}_{l}\) that contains \( i \).
	In particular, \( B^{0}(i) = \{ i \} \) and \( B^{k}(i) = [n] \). 
	
	For each \( i \in [n] \) and every \( l = 0, \dots, k \) consider the random variables
	\[
	g_{i}^{l} = g_{i}^{l}(Z_i, Z_{[n] \setminus B^{l}(i)}) = \E[g_{i} \vert\; Z_i, Z_{[n] \setminus B^{l}(i)}],
	\]
	i.e. conditioned on \(Z_i\) and all the variables that are not in the same set as \(Z_i\) in the partition \(\mathcal{B}_l\).
	In particular, \( g_{i}^{0} = g_{i} \) and \( g_{i}^{k} = \E[g_{i} \cond Z_i]\). We can write a telescopic sum for each \( i \in [n]\),
	\[
	g_i - \E[g_i \cond Z_i] = \sum_{l = 0}^{k-1} g_{i}^{l} - g_{i}^{l + 1},
	\]
	and the total sum of interest satisfies by the triangle inequality
	\begin{equation}\label{proof_triangle}
	\left\| \sum_{i = 1}^{n} g_i \right\|_{p} \leq \left\| \sum_{i = 1}^{n} \E[g_i \cond Z_i] \right\|_{p} + \sum_{l = 0}^{k-1} \left\| \sum_{i = 1}^{n} g_i^l - g_i^{l+1} \right\|_{p}. 
	\end{equation}
	Since \( |\E[g_i \cond Z_i]| \leq M \)  and $\E(\E[g_i \cond Z_i]) = 0$, by applying \eqref{hoeffding} we have
	\begin{equation}\label{proof_hoeffding}
	\left\| \sum_{i = 1}^{n} \E[g_i \cond Z_i] \right\|_{p} \leq 4 \sqrt{pn} M .
	\end{equation}
	The only non-trivial part is the second term of the r.h.s. of \eqref{proof_triangle}. Observe that 
	\[ 
	g_{i}^{l + 1}(Z_i, Z_{[n] \setminus B^{l + 1}(i)}) = \E [g_{i}^{l}(Z_i, Z_{[n] \setminus B^{l}(i)}) \cond Z_i, Z_{[n] \setminus B^{l + 1}(i)}],
	\] that is, the expectation is taken w.r.t. the variables \( Z_j, j\in B^{l + 1}(i) \setminus B^{l}(i) \). It is also not hard to see that the function \(g_i^l \) preserves the bounded differences property, just like the the function \(g_i\). Therefore, if we apply Lemma~\ref{boundeddiff} conditioned on \( Z_i, Z_{[n]\setminus B^{l+1}(i)} \), we obtain a uniform bound
	\begin{equation*}\label{g_i_subgauss}
	\| g_{i}^{l} - g_{i}^{l + 1} \|_{p}(Z_i, Z_{[n]\setminus B^{l+1}(i)}) \leq 2\sqrt{p 2^{l}} \beta,
	\qquad
	\forall p \geq 2,
	\end{equation*}
	as there are \(2^l \) indices in \(B^{l + 1}(i) \setminus B^{l}(i) \). We have as well \( \|g_{i}^{l} - g_{i}^{l + 1} \|_{p} \leq 2\sqrt{p 2^{l}} \beta \) by~\eqref{moment_cond_to_uncond}. 
	
	Let us take a look at the sum \( \sum_{i \in B^{l}} g_{i}^{l} - g_{i}^{l + 1} \) for \(B^l \in \mathcal{B}_l\). Since \( g_i^{l} - g_i^{l + 1} \) for \(i \in B^{l}\) depends only on \(Z_i, Z_{[n]\setminus B^{l}}\), the terms are independent and zero mean conditioned on \( Z_{[n]\setminus B^{l}} \).
	Applying Lemma~\ref{marcinkiewicz}, we have for any \(p \geq 2\),
	\[
	\left\| \sum_{i \in B^{l}} g_{i}^{l} - g_{i}^{l + 1} \right\|_{p}^{p}(Z_{[n]\setminus B^{l}}) \leq (3\sqrt{2p2^l})^p \frac{1}{2^l} \sum_{i \in B^l} \|g_{i}^{l} - g_{i}^{l + 1} \|_{p}^p(Z_{[n]\setminus B^{l}}).
	\]
	Integrating with respect to $(Z_{[n]\setminus B^{l}})$ and using \( \|g_{i}^{l} - g_{i}^{l + 1} \|_{p} \leq 2\sqrt{p 2^{l}} \beta \), we have
	\[
	\left\| \sum_{i \in B^{l}} g_{i}^{l} - g_{i}^{l + 1} \right\|_{p} \leq 3\sqrt{2p2^l} \times 2\sqrt{p 2^l} \beta = 6\sqrt{2} p 2^l \beta \, .
	\]
	It is left to use the triangle inequality over all sets \( B^l \in \mathcal{B}_{l} \). We have
	\[
	\left\| \sum_{i \in [n]} g_{i}^{l} - g_{i}^{l + 1} \right\|_{p}  \leq \sum_{B^l \in \mathcal{B}_l} \left\| \sum_{i \in B^{l}} g_{i}^{l} - g_{i}^{l + 1} \right\|_{p}  \leq 2^{k-l} \times 6 \sqrt{2} p 2^l \beta 
	= 6 \sqrt{2} p 2^k \beta.
	\]
	Recall, that \( 2^k < 2n \) due to the possible extension of the sample. Then
	\[
	\sum_{l = 0}^{k-1} \left\| \sum_{i = 1}^{n} g_i^l - g_i^{l+1} \right\|_{p} \leq 12 \sqrt{2} p n \beta \lceil \log_{2} n \rceil \, .
	\]
	Plugging the above bound together with \eqref{proof_hoeffding} into \eqref{proof_triangle}, we get the required bound.
\end{proof}

\begin{remark}
We remark that a version of Theorem \ref{concentration_thm} holds if e.g., $\E[g_i \cond Z_i] $ is sub-Gaussian for $i \in [n]$, but for the sake of presentation we focus on this simplified form. 
\end{remark}

\begin{remark}
	The strategy of the proof of Theorem \ref{concentration_thm} is inspired by the original approach of \citeauthor{Feldman2019}. Their \emph{clamping} can be related to the analysis of the terms \( g_{i}^{l} - g_{i}^{l + 1} \). It is important to notice that the truncation part of their analysis creates some technical difficulties since it introduces some bias and changes the stability parameter. In particular, the truncation brings an unnecessary logarithmic factor.  We entirely avoid these steps by a simple application of the Marcinkiewicz-Zygmund inequality. The analog of the \emph{dataset reduction} step of Feldman and Vondr\'ak is our nested partition scheme. However, the recursive structure of their approach is replaced by an application of telescopic sums, whereas the union bound, which also brings a logarithmic factor, is replaced by the triangle inequality for $L_p$ norms. Apart from a much shorter proof, our analysis leads to a better result: we eliminate the unnecessary $n\beta(\log n)^2$ term.  
\end{remark}

\subsection{Consequences for Uniformly Stable Algorithms}
In order to make use of Theorem \ref{concentration_thm} to obtain generalization bounds, we will consider the following functions:
\begin{equation}
\label{gi}
g_i = g_i(Z_1, \ldots, Z_n) = \E_{(X^{\prime}_i, Y^\prime_i)}\left(\E_{(X, Y)}\ell(A_{S^{i}}(X), Y) - \ell(A_{S^{i}}(X_i), Y_i)\right)\,,
\end{equation}
where we recall that
\[
S^{i} = \{(X_1, Y_1), \ldots, (X_{i - 1}, Y_{i - 1}), (X^{\prime}_i, Y^\prime_i), (X_{i + 1}, Y_{i + 1}), \dots, (X_n, Y_n)\},
\]
and $Z_i^{\prime} = (X^{\prime}_i, Y^\prime_i)$ is an independent copy of $(X_i, Y_i)$. The key observation is captured by the following simple lemma.

\begin{lemma}\label{stable_gs}
	Under the uniform stability condition with parameter $\gamma$ \eqref{stability_original} and uniform boundedness of the loss function \( \ell(\cdot, \cdot) \leq L\) we have for $g_i$ defined by $\eqref{gi}$, that
	\[
	\left||n(R(A_{S}) - R_{\text{emp}}(A_{S}))| - \left|\sum\limits_{i = 1}^ng_i\right|\right| \le 2\gamma n.
	\]
	Moreover, we have a. s. for $i \in [n]$, $|g_i| \le L$ and $\E[\E[g_i|Z_{[n]\setminus{i}}]] = 0$.
	
	Finally, as a deterministic function $g_i(z_1, \ldots, z_n)$ satisfies the bounded difference condition \eqref{boundedcond} with $\beta = 2\gamma$ for all except the $i-th$ variable.
\end{lemma}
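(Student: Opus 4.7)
The plan is to verify each of the three assertions in turn by relating $g_i$ to the quantity $R(A_S) - \ell(A_S(X_i), Y_i)$ and exploiting uniform stability together with the boundedness of $\ell$. First I would rewrite
\[
n(R(A_S) - R_{\text{emp}}(A_S)) = \sum_{i=1}^n \bigl(R(A_S) - \ell(A_S(X_i), Y_i)\bigr),
\]
and observe that \eqref{stability_original} implies $|R(A_S) - R(A_{S^i})| \le \gamma$ (by integrating the pointwise stability bound against $P$) as well as $|\ell(A_S(X_i), Y_i) - \ell(A_{S^i}(X_i), Y_i)| \le \gamma$. Taking expectation over the fresh copy $Z_i'$ then gives $|R(A_S) - \ell(A_S(X_i), Y_i) - g_i| \le 2\gamma$, and summing over $i$ followed by the reverse triangle inequality yields the first claim.

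For the second claim, $|g_i| \le L$ is immediate since both $R(A_{S^i})$ and $\ell(A_{S^i}(X_i), Y_i)$ lie in $[0,L]$, and the outer expectation over $Z_i'$ preserves the bound. To establish $\E[g_i \mid Z_{[n]\setminus\{i\}}] = 0$, I would fix $Z_j$ for $j \ne i$ so that $S^i$ depends only on the independent copy $Z_i'$ and not on $Z_i$. The first term $\E_{Z_i'} R(A_{S^i})$ is deterministic given the conditioning, while for the second, Fubini combined with the independence of $Z_i \sim P$ from $S^i$ gives $\E_{Z_i}\E_{Z_i'} \ell(A_{S^i}(X_i), Y_i) = \E_{Z_i'} R(A_{S^i})$, which cancels the first term exactly.

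Finally, for the bounded differences property with $\beta = 2\gamma$, I would fix $j \ne i$ and note that replacing the $j$-th argument changes $S^i$ in its $j$-th coordinate only; applying \eqref{stability_original} to the loss at an independent test point (after integration against $P$) and at $(X_i, Y_i)$ bounds each of the two pieces of $g_i$ by $\gamma$, and the outer expectation over $Z_i'$ is harmless. The restriction to $j \ne i$ is essential: with respect to the $i$-th coordinate only the evaluation point in $\ell(A_{S^i}(X_i), Y_i)$ changes while $A_{S^i}$ does not, so the resulting difference can be as large as $L$ rather than $\gamma$. None of these steps poses a genuine obstacle; the only care needed is bookkeeping which expectation is being taken at each stage and, in particular, being explicit that $Z_i$ and $Z_i'$ play asymmetric roles in the definition of $g_i$.
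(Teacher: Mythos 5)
Your proof is correct and takes essentially the same approach as the paper: insert $A_{S^i}$ in place of $A_S$ using the pointwise stability bound (giving $2\gamma$ per term, hence $2\gamma n$ after summing and the reverse triangle inequality), then verify the remaining structural properties of $g_i$ directly. The paper dismisses the latter as ``immediately verified,'' whereas you spell out the Fubini/independence argument for $\E[g_i\mid Z_{[n]\setminus\{i\}}]=0$ and the role of the asymmetry in the $i$-th coordinate, which is a faithful filling-in of what the authors leave implicit.
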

\begin{proof}
	By uniform stability, we can write the following  decomposition 
	\begin{align*}
	&|n(R(A_{S}) - R_{\text{emp}}(A_{S}))|= \left|\sum\limits_{i = 1}^n\left(\E_{(X, Y)}\ell(A_{S}(X), Y) - \ell(A_{S}(X_i), Y_i)\right)\right| 
	\\
	&\le 2\gamma n + \left|\sum\limits_{i = 1}^n\E_{(X^{\prime}_i, Y^\prime_i)}\left(\E_{(X, Y)}\ell(A_{S^{i}}(X), Y) - \ell(A_{S^{i}}(X_i), Y_i)\right)\right| =2\gamma n + \left|\sum\limits_{i = 1}^ng_i\right|.
	\end{align*}
	Similarly to the computations above we have
	\[
	|n(R(A_{S}) - R_{\text{emp}}(A_{S}))| \ge \left|\sum\limits_{i = 1}^ng_i\right| - 2\gamma n.
	\]
	The remaining properties can be immediately verified.
\end{proof}

Using the above lemma, we can now obtain our main generalization bound.
\begin{corollary}
	\label{riskbound}
	Under the uniform stability condition \eqref{stability_original} with parameter $\gamma$ and the uniform boundedness of the loss function \( \ell(\cdot, \cdot) \leq L\),  we have that for any \(\delta \in (0, 1) \), with probability at least \(1 - \delta \),
	\[
	\label{betterrisk}
	|n(R(A_{S}) - R_{\text{emp}}(A_{S}))| \lesssim n\gamma \log n \log \left(\frac{1}{\delta}\right) + L \sqrt{n\log \left(\frac{1}{\delta}\right)} \, .
	\]
\end{corollary}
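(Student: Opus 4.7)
The plan is to chain together the three main results already established in the excerpt: Lemma~\ref{stable_gs}, Theorem~\ref{concentration_thm}, and Lemma~\ref{dev_moment_equiv:lemma}. The strategy is to first reduce the generalization error to a sum of weakly dependent terms, then control the moments of that sum, and finally convert the moment bound into a tail bound.

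First, I would invoke Lemma~\ref{stable_gs} on the functions $g_i$ defined in \eqref{gi}. This immediately gives the deterministic sandwich
\[
\bigl| n(R(A_S) - R_{\mathrm{emp}}(A_S)) \bigr| \le \left| \sum_{i=1}^n g_i \right| + 2\gamma n,
\]
so it suffices to bound $|\sum_i g_i|$ with high probability. The same lemma verifies exactly the three hypotheses of Theorem~\ref{concentration_thm}: the conditional expectations $\E[g_i \mid Z_i]$ are bounded in absolute value by $L$ (since $|g_i| \le L$ a.s.), $\E[g_i \mid Z_{[n]\setminus\{i\}}] = 0$ a.s.\ by construction through integration over the fresh copy $Z_i'$, and $g_i$ has bounded differences $\beta = 2\gamma$ in every coordinate except the $i$-th by uniform stability applied twice (once for $A_{S^i}$ and once for the empirical term).

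Next, I would apply Theorem~\ref{concentration_thm} with $M = L$ and $\beta = 2\gamma$ to obtain, for every $p \ge 2$,
\[
\left\| \sum_{i=1}^n g_i \right\|_p \;\le\; 24\sqrt{2}\, p\, n\, \gamma\, \lceil \log_2 n \rceil + 4 L \sqrt{pn}.
\]
This is a bound of the form $\|Y\|_p \le \sqrt{p}\,a + p\,b$ with $a \sim L\sqrt{n}$ and $b \sim n\gamma \log n$, so the second half of Lemma~\ref{dev_moment_equiv:lemma} converts it into a sub-gaussian/sub-exponential mixture deviation bound: with probability at least $1-\delta$,
\[
\left| \sum_{i=1}^n g_i \right| \;\lesssim\; L \sqrt{n \log(1/\delta)} + n\gamma \log n \log(1/\delta).
\]
Combining this with the deterministic inequality from Lemma~\ref{stable_gs} and absorbing the additive $2\gamma n$ into the $n\gamma \log n \log(1/\delta)$ term (which dominates it up to constants for $\delta \in (0,1)$ and $n \ge 2$) yields the desired bound.

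I do not anticipate a serious obstacle, since all the heavy lifting is already done in Theorem~\ref{concentration_thm}; the only points that require some care are (i) checking the bounded differences property of $g_i$ cleanly, where one must note that changing $Z_j$ for $j \ne i$ alters both $\ell(A_{S^i}(X),Y)$ and $\ell(A_{S^i}(X_j),Y_j)$, each by at most $\gamma$ by uniform stability, giving $\beta = 2\gamma$, and (ii) the bookkeeping in passing from a $\|\cdot\|_p$ inequality to a deviation inequality through Lemma~\ref{dev_moment_equiv:lemma}, which is routine.
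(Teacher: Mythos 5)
Your proof is correct and follows exactly the same route as the paper's: combine Lemma~\ref{stable_gs} (to reduce to $\sum_i g_i$ with the parameters $M=L$, $\beta=2\gamma$) with Theorem~\ref{concentration_thm} to get a moment bound, then convert it to a deviation bound via Lemma~\ref{dev_moment_equiv:lemma}. The only difference is that you spell out the verification of the hypotheses and the absorption of the $2\gamma n$ term, which the paper leaves implicit.
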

The last bound is an improvement of the recent upper bound for uniformly stable algorithms by Feldman and Vondr\'ak \eqref{secondfeldmanineq}. To be precise, we removed the unnecessary $n\gamma\log^2 n$ term.

\begin{proof}
	Combining Lemma~\ref{stable_gs} and Theorem~\ref{concentration_thm} with \( g_i\) defined in \eqref{gi}, $M = L$, and $\beta = 2\gamma$, we have for any \(p \geq 2\),
	\[
	\| n(R - R_{emp})\|_{p} \lesssim p n \gamma \log n + L \sqrt{pn} .
	\]
	The deviation bound now follows immediately from Lemma~\ref{dev_moment_equiv:lemma}.
\end{proof}

\section{Lower Bounds}
\label{lower_bounds}
Since the bound of Theorem \ref{concentration_thm} implies the best known risk bound, it is natural to ask if it can be improved in general. By Lemma~\ref{stable_gs}, we know that the analysis of the generalization bounds is closely related to the analysis of the functions satisfying the assumptions of Theorem \ref{concentration_thm}. Therefore, it is interesting to know how sharp the general bound \eqref{improvedthm} is. Recall that
\[
\left\|\sum\limits_{i = 1}^n{g}_i(Z)\right\|_p \lesssim \left(p n \beta \log n + M\sqrt{pn}\right)\wedge nL,
\]
where, as before, $L$ is a uniform bound on $|g_i|$. In this section, we prove that one can not improve the bound of Theorem \ref{concentration_thm}, apart from the $\log n$-factor, and the bound is tight with respect to the parameters \(M, \beta, n, \log \frac{1}{\delta} \) in some regimes. We notice, however, that this does not completely answer the question of the optimality of the risk bound of Corollary \ref{riskbound} for uniformly stable algorithms, but shows that this is the best we can hope for as long as our upper bound is based only on the parameters \(M, \beta, n, \log \frac{1}{\delta} \). In particular, Theorem~\ref{concentration_thm} disregards the condition \( | g_{i} | \leq L \). We discuss this in more detail in what follows.

\begin{proposition}[The lower bound, $p \le n$]
	\label{lowerbound}
	Let \(Z_1, \dots, Z_n \) be i.i.d. Rademacher signs. There is an absolute constant \( \kappa > 0 \) and functions \( g_i : \{-1, 1\}^n \rightarrow \R \) that satisfy the conditions of Theorem~\ref{concentration_thm} with the parameters \( \beta\), \(M \), such that we have for any \( \kappa \leq p \leq n\),
	\begin{equation}
	\label{lowerb}
	\left\| \sum_{i = 1}^{n} g_{i}(Z_1, \dots, Z_n) \right\|_{p} \gtrsim p n \beta + M \sqrt{pn} .
	\end{equation}
\end{proposition}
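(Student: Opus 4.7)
The plan is to realize each of the two terms of the bound with a separate construction supported on disjoint halves of the coordinates, then use independence to combine. Split the index set as $A = \{1, \dots, \lfloor n/2 \rfloor\}$ and $B = [n] \setminus A$. For $i \in A$ define $g_i(Z) = M Z_i$, and for $i \in B$ define $g_i(Z) = \tfrac{\beta}{2}\, Z_i \sum_{j \in B \setminus \{i\}} Z_j$. Verifying the three assumptions of Theorem~\ref{concentration_thm} is immediate. For $i \in A$ the function $g_i$ depends only on $Z_i$, so the bounded-difference constant in every other coordinate is $0$, $|\E[g_i \cond Z_i]| = M$, and $\E[g_i \cond Z_{[n]\setminus\{i\}}] = 0$ since $\E Z_i = 0$. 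For $i \in B$, flipping a single $Z_j$ with $j \ne i$ changes $g_i$ by at most $\beta$; also $\E[g_i \cond Z_i] = 0$ because the inner sum is mean zero, and $\E[g_i \cond Z_{[n]\setminus\{i\}}] = 0$ because $\E Z_i = 0$.

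With $S_A = \sum_{i \in A} Z_i$ and $S_B = \sum_{i \in B} Z_i$, the total sum equals
\[
\sum_{i = 1}^{n} g_i \;=\; M\,S_A \;+\; \tfrac{\beta}{2}\bigl(S_B^{2} - |B|\bigr),
\]
where the two terms depend on disjoint blocks of coordinates, are independent, and each has mean zero. A short conditioning plus Jensen argument (using convexity of $|\cdot|^p$ and $\E X = 0 = \E Y$) gives $\|X + Y\|_{p} \ge \max(\|X\|_{p}, \|Y\|_{p})$ for independent mean-zero $X, Y$, which reduces the problem to lower-bounding the $L^p$ norm of each summand separately.

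For the first summand, the lower direction of Khintchine's inequality for Rademacher sums gives $\|S_A\|_{p} \gtrsim \sqrt{p\,|A|} \sim \sqrt{pn}$ in the range $2 \le p \le |A|$, hence $\|M\,S_A\|_{p} \gtrsim M\sqrt{pn}$. For the second summand, write $\|S_B^{2}\|_{p} = \|S_B\|_{2p}^{2}$ and apply Khintchine again in the range $2p \le |B|$ to obtain $\|S_B^{2}\|_{p} \gtrsim pn$; then the triangle inequality yields $\|S_B^{2} - |B|\|_{p} \ge \|S_B^{2}\|_{p} - |B| \gtrsim pn$ once $p$ exceeds some absolute constant $\kappa$. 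Combining these estimates proves the proposition for $\kappa \le p \le \lfloor n/4 \rfloor$; monotonicity of $p \mapsto \|\cdot\|_{p}$ together with the bound at $p = \lfloor n/4 \rfloor$ extends it to $\kappa \le p \le n$ at the cost of an absolute multiplicative constant.

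The main obstacle is the final lower bound $\|S_B^{2} - |B|\|_{p} \gtrsim pn$: the second moment alone only yields $\|S_B^{2} - |B|\|_{2} \sim n$, so the linear dependence on $p$ requires the sharp lower form of Khintchine's inequality applied to $\|S_B\|_{2p}$, or equivalently a Paley--Zygmund estimate based on the exponentially decaying lower bound on the Rademacher sum tail. Everything else in the argument is routine bookkeeping of the three conditions of Theorem~\ref{concentration_thm}.
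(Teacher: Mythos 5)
Your proof is correct, and it takes a genuinely different route from the paper's. The paper uses a single construction $g_i = M Z_i + \tfrac{\beta}{2} Z_i \sum_{j \ne i} Z_j$ on all coordinates, so that $\sum_i g_i = MS + \tfrac{\beta}{2}S^2 - \tfrac{\beta}{2}n$ with $S$ the full Rademacher sum; the linear and quadratic parts are then (positively) correlated, and the paper handles this via a truncation $S_+ = S\Ind[S \ge 0]$ together with Chebyshev's association inequality to lower bound $\|MS_+ + \tfrac{\beta}{2}S_+^2\|_p$ by the product $\|S_+\|_p\,(M \vee \tfrac{\beta}{2}\|S_+\|_p)$, and only then invokes the Hitczenko moment bound for $\|S\|_p$. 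You instead place the linear and quadratic terms on disjoint halves of the coordinates, so that $MS_A$ and $\tfrac{\beta}{2}(S_B^2 - |B|)$ are independent and mean zero. This replaces the association-inequality step by the one-line Jensen estimate $\|X + Y\|_p \ge \max(\|X\|_p, \|Y\|_p)$, and then both blocks are handled directly by the sharp lower form of Khintchine (equivalently Lemma~\ref{hitczenko} with unit coefficients), using $\|S_B^2\|_p = \|S_B\|_{2p}^2$. The price you pay is a block of size $\approx n/2$, which restricts the direct argument to $p \lesssim n/4$, but the extension to $p \le n$ by monotonicity of $L_p$ norms is routine as you note. Net effect: your argument is a bit more elementary (no truncation, no association inequality) at the cost of a less ``homogeneous'' example (the $g_i$ have two different forms), whereas the paper's single construction is closer in spirit to the $g_i$'s arising from an actual stable algorithm. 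Both arguments rest on the same Hitczenko/Montgomery--Smith moment characterization, and both lose only absolute constants.
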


The proof is based on a moment version of the Montgomery-Smith bound (originally from \citep{montgomery1990distribution}) which is due to~\cite{hitczenko1993domination}. We postpone the proof to Appendix~\ref{lb_proof}. We note additionally that the case \(M = 0 \) follows immediately from Corollary~1, Example~2 by \cite{Latala99}.

The lower bound of Proposition~\ref{lowerbound} matches the result of Theorem~\ref{concentration_thm} up to the logarithmic factor in the regime $p \le n$. In particular, it means that in this regime, the bound has to be sub-exponential unless we use some properties of the functions \(g_i\), other than mentioned in Theorem~\ref{concentration_thm}. We additionally note that our moment lower bounds imply the deviation lower bounds. We can show that there are absolute constants \(c_1, c_2 > 0 \) such that the functions defined in \eqref{chaos} satisfy for any \( \delta \in (e^{-c_1n}, 1) \),
\begin{equation}\label{lower_deviation}
\P\left( \left| \sum_{i = 1}^{n} g_i \right| \lesssim  n \beta \log\left(\frac{1}{\delta}\right) + M \sqrt{n \log\left(\frac{1}{\delta}\right)} \right) \leq
1 - \min(c_2, \delta).
\end{equation}
This bound can be derived through the Paley-Zygmund inequality, e.g., using the standard arguments as in \citep*{gluskin1995tail}. For the sake of completeness, we derive this inequality in Appendix~\ref{paley_zygmund}.

Besides, when \( p > n\), a trivial bound \( \left\| \sum_{i = 1}^{n} g_{i} \right\|_{p} \lesssim nL \) is the best one can have. Like in \eqref{gi_lowerbound}, consider the functions \( g_{i} = L Z_i \), where \( Z_i \) are i.i.d. Rademacher signs (it corresponds to a learning algorithm that always outputs the same classifier). By Lemma~\ref{hitczenko} we have for \(p > n\),
\[
\left\| \sum_{i = 1}^{n} g_{i} \right\|_{p} = L \left\| \sum_{i = 1}^{n} Z_{i} \right\|_{p} \gtrsim Ln \, . 
\]
\section{Discussions}
\label{discussion}
\paragraph{On the Sub-Optimality of \eqref{bousquetbound}}

At first, we prove an exact moment analog of \eqref{bousquetbound} for $\left|\sum\limits_{i = 1}^ng_i\right|$, for $g_i$ defined by \eqref{gi}. We have in mind the illustrative regime of $L = 1$ and $\gamma = \frac{1}{\sqrt{n}}$, this is exactly when the bound \eqref{secondfeldmanineq} balances the two terms. By the triangle inequality we have
\begin{align*}
\left\|\sum\limits_{i = 1}^n g_i\right\|_p &\le \sum\limits_{i = 1}^n\|g_i - \E[g_i|Z_i]\|_p + \left\|\sum\limits_{i = 1}^n\E[g_i|Z_i]\right\|_p
\\
&\le 4n\sqrt{np}\gamma + 4\sqrt{pn}L.
\end{align*}
where we used that conditioned on $Z_i$ the random variable $g_i - \E[g_i|Z_i]$ is centered and combined this fact with Lemma~\ref{boundeddiff}. Since $\sum\limits_{i = 1}^n\E[g_i|Z_i]$ is a sum of independent centered bounded random variables, Hoeffding's inequality \eqref{hoeffding} is applicable to $\left\|\sum\limits_{i = 1}^n\E[g_i|Z_i]\right\|_p$.

Observe that we lose a lot by replacing $\left\|\sum\limits_{i = 1}^n(g_i - \E[g_i|Z_i])\right\|_p$ with $\sum\limits_{i = 1}^n\|g_i - \E[g_i|Z_i]\|_p$. Indeed, it is easy to see that the random variables $g_i$, as well as $g_i - \E[g_i|Z_i]$, are weakly correlated. Indeed, for $i \neq j$ using $\E[g_i|Z_{[n] \setminus \{i\}}] = 0$ and $\E[g_j|Z_{[n] \setminus \{j\}}] = 0$ together with the bounded difference property, we have
\begin{equation}
\label{corellation}
\left|\E g_i g_j\right| = |\E (g_i(Z) - g_i(Z^j))(g_j(Z) - g_j(Z^i))| \le 4\gamma^2.
\end{equation}
This suggests that for $\gamma = \frac{1}{\sqrt{n}}$, the random variables $g_i$ and $g_j$ have small correlation. However, the original argument in \citep{Bousquet02} does not take this into account and would give the same bound even if all $g_i - \E[g_i|Z_i]$ were replaced by the same random variable $g_1 - \E[g_1|Z_1]$.  

\paragraph{A Better Second Moment Bound}
Actually it turns out that using the property of weak correlation between the $g_i$ functions one can obtain a tight upper bound on the second moment.
Indeed, \eqref{corellation} with \(\gamma = \beta / 2\) together with the bounded difference property for $g_i$ readily gives
\begin{align*}
\left\|\sum_{i = 1}^n g_i\right\|_2 = &\sqrt{\sum\limits_{i \neq j}\E g_i g_j + \sum\limits_{i = 1}^n\E g^2_i} \le n\beta + \sqrt{n}\max\limits_{i \in [n]}\|g_i\|_2
\\
&\le n\beta + \sqrt{n}\max\limits_{i \in [n]}\left(\|g_i - \E [g_{i}|Z_i]\|_2 + \|\E [g_{i}|Z_i]\|_2\right)
\\
&\le (1 + 2\sqrt{2}) n\beta + \sqrt{n}M.
\end{align*}
We also note that an argument analogous to \eqref{corellation} was first used in \citep{Feldman2018} to prove the following variance bound
\begin{equation}
\label{varianceformula}
\text{Var}(n(R(A_{S}) - R_{\text{emp}}(A_{S}))) \lesssim n^2\gamma^2 + nL^2.
\end{equation}
Note that our second moment bound is a little better as it uses $M$ instead of $L$.

\paragraph{Recovering \eqref{firstfeldmanineq}}
Another interesting direction is the analysis of the first bound of Feldman and Vondr\'ak \eqref{firstfeldmanineq}, which was originally proved by the techniques taking their roots from Differential Privacy (see the discussions on various ways to prove this bound in \citep{Feldman2019}). As already noticed in \citep{Feldman2019}, the bound \eqref{firstfeldmanineq} should not be discarded due to the fact that it does not contain additional $\log n$-factors and, more importantly, it has the \emph{sub-gaussian} form, since it depends only on $\sqrt{\log \frac{1}{\delta}}$. Recall that the second bound \eqref{secondfeldmanineq} is a mixture of \emph{sub-gaussian} $\sqrt{\log \frac{1}{\delta}}$ and \emph{sub-exponential} $\log \frac{1}{\delta}$ tails. Although we can adapt the moment technique to prove \eqref{firstfeldmanineq}, we instead come to the following more general observation: 
\begin{center}
	\emph{The bound of Theorem~\ref{concentration_thm} is strong enough to almost  recover the sub-gaussian bound \eqref{firstfeldmanineq}}.
\end{center}
In order to show this, we have by Theorem~\ref{concentration_thm}, provided that $|g_i| \le L$ almost surely
\begin{equation}
\label{improvedthm}
\left\|\sum\limits_{i = 1}^n{g}_i(Z)\right\|_p \le \left(12 \sqrt{2} p n \beta \lceil \log_{2} n \rceil + 4 M\sqrt{pn}\right)\wedge nL .
\end{equation}
Since $M \le L$ and for $a, b \ge 0$, $a\wedge b \le \sqrt{ab}$ (which is rather crude) we have for \(p \leq n\),
\begin{align*}
\left\|\sum\limits_{i = 1}^n{g}_i(Z)\right\|_p & \leq  (12\sqrt{2}p n \beta \lceil \log_{2} n \rceil \wedge Ln) + 4L\sqrt{pn} \\
& \lesssim  n\sqrt{p \beta L \log n } + L\sqrt{pn}.
\end{align*}
Similarly to the proof of Corollary \ref{riskbound}, it immediately implies 
\begin{equation}
\label{secondnewbound}
n|R(A_{S}) - R_{\text{emp}}(A_{S})| \lesssim (n\sqrt{\gamma L\log n} + L\sqrt{n})\sqrt{\log \frac{1}{\delta}},
\end{equation}
which is \eqref{firstfeldmanineq} up to an unnecessary $\sqrt{\log n}$ factor. The latter is clearly an artifact of the proof in our case.

\paragraph{Interpolation Algorithms}
We observe here that our results can potentially be applied to interpolation algorithms. As mentioned earlier, in this case, since  $R_{emp}(A_S)=0$ one cannot hope to get any non-trivial bound on the generalization error (since there are distributions for which $\E R(A_S)$ cannot go to zero).
The idea is then to use another estimator of the risk, namely the leave-one-out estimator, which can be defined as
\[
R_{loo}(A_S)=\frac{1}{n}\sum_{i=1}^n \ell\left(A_{S^{\setminus i}}(X_i),Y_i\right)\,, 
\]
where $S^{\setminus i}$ is $S\setminus \{(X_i, Y_i)\}$. Then an analogue of Lemma \ref{stable_gs} can be obtained with $R_{loo}$ instead of $R_{emp}$.
Note that one also needs to modify the notion of uniform stability to exclude the case where the test point is in the training set. However, proving non-trivial stability results for realistic interpolation algorithms or extending our results to use something like hypothesis stability (which is an in-expectation stability instead of the worst-case one) would still require more work.

\section{Open Questions}
We have presented some progress towards narrowing down the exact behaviour of the difference between true and empirical risk for uniformly stable algorithms with almost matching upper and lower bounds. But there are a few remaining open questions regarding the tightness of our results. For the upper bound,
since it is possible to get rid of the factor $\log n$ in the second moment, as demonstrated by \eqref{varianceformula}, there is hope that this factor can be completely removed for $p > 2$ in Theorem \ref{concentration_thm} as well as in \eqref{bestbound}.

For the lower bound, it would be interesting to find a learning algorithm with uniformly bounded functions \( |g_{i} | \leq L \) and a generalization bound matching the high probability lower bound~\eqref{lower_deviation}. Indeed, Proposition \ref{lowerbound} shows that the high probability upper bound of Theorem \ref{concentration_thm} can not be improved in general (apart from the logarithmic factor). At the same time, our bound does not take into account that $L$ can actually be of the same order as $M$, which happens in the context of uniformly stable learning algorithms. In particular, the example of Proposition~\eqref{lower_deviation} has $L = \beta(n - 1)$, which can be much larger than $M$ in the regime of $\beta = \frac{1}{\sqrt{n}}$. Therefore, in this context it will be interesting to find (if it is possible at all) a lower bound of the form
\[
\left\| \sum_{i = 1}^{n} g_{i}(Z_1, \dots, Z_n) \right\|_{p} \gtrsim p n \beta + L \sqrt{pn}, 
\]
for the functions $g_i$ corresponding to a uniformly stable algorithm.

Finally, some extensions of Lemma \ref{boundeddiff} are known in the unbounded case (see e.g., \citep*{kontorovich2014concentration}). This may  lead to the generalization bounds for the unbounded loss functions using our Theorem \ref{concentration_thm}. Following \cite{Feldman2019}, there is a renewed interest in the use of stability bounds for studying learning algorithms, for example \citep*{foster2019hypothesis}, whose results can possibly be further improved using the techniques we presented.

\acks{We would like to thank Vitaly Feldman and Shay Moran for stimulating discussions and Jan Vondr\'ak for valuable feedback.}

\bibliography{mybib}

\appendix

\section{Proof of Lemma~\ref{dev_moment_equiv:lemma}}\label{dev_moment_equiv_proof:section}

It is easy to see that the random variable \( (|Y| - a - b)_{+} \) satisfies the requirements of Theorem~2.3 in \citep{boucheron2013concentration}. Therefore, we have for each integer \(p \geq 1\),
\begin{align*}
\| (|Y| - a - b)_{+} \|_{2p} & \leq \sqrt[2p]{p! (2a)^{2p} + (2p)! (4b)^{2p}} \\
& \leq \sqrt{2 p} a + 4 pb ,
\end{align*}
and the first bound follows from the trivial \( \| Y \|_{p} \leq \| (|Y| - a - b)_{+} \|_{2p} + a + b \). 

On the other hand, if \( \| Y \|_{p} \leq \sqrt{p} a + pb \) for any \(p \geq 1\), by Markov's inequality for any \( \delta \in (0, 1)\),
\[
\P\left( |Y| > \| Y \|_{p} e^{\frac{\log\left(\frac{1}{\delta}\right)}{p}} \right) \leq \left( \frac{\| Y \|_{p}}{\| Y \|_{p} e^{\frac{\log\left(\frac{1}{\delta}\right)}{p}}}\right)^{p} = \delta.
\]
It remains to pick \( p = \log \left(\frac{e}{\delta} \right) \geq 1\), so that the exponent disappears and we get the required bound.

\section{Proof of Proposition~\ref{lowerbound}}
\label{lb_proof}

As before, we need two well-known facts from probability theory. The first lemma is a moment version of the Montgomery-Smith bound (originally from \citep{montgomery1990distribution}) which is due to~\cite{hitczenko1993domination}. It characterizes the moments of Rademacher sums up to a multiplicative constant factor. 

\begin{lemma}[Moments of weighted Rademacher sums, \citep{hitczenko1993domination}]
	\label{hitczenko}
	Let \(a_1 \geq  \ldots \geq a_n\) be a non-increasing sequence of non-negative numbers and let \(\varepsilon_1, \dots, \varepsilon_n \) denote i.i.d. Rademacher signs. Then,
	\[
	\left\| \sum_{i = 1}^{n} a_i \varepsilon_i \right\|_{p} \sim \sum_{i = 1}^{p} a_i + \sqrt{p} \left( \sum_{i = p + 1}^{n} a_i^2 \right)^{1/2} . 
	\]
\end{lemma}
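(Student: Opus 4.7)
The plan is to prove the two-sided equivalence by splitting the Rademacher sum at index $p$ and estimating the two pieces separately. Write $S_1 = \sum_{i \le p} a_i \varepsilon_i$ and $S_2 = \sum_{i > p} a_i \varepsilon_i$; these are independent and each symmetric. Throughout, I will repeatedly use the following consequence of Jensen's inequality: for independent symmetric $X, Y$ one has $\|X + Y\|_p \ge \|X\|_p$, obtained by conditioning on $X$ and using $\E[X + Y \mid X] = X$.

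For the upper bound, the triangle inequality in $L_p$ gives $\|S_1\|_p \le \sum_{i=1}^{p} a_i$ since $\|a_i \varepsilon_i\|_p = a_i$. The tail satisfies the classical Khintchine inequality $\|S_2\|_p \lesssim \sqrt{p}\,(\sum_{i > p} a_i^2)^{1/2}$ (a version of Lemma~\ref{marcinkiewicz}). Adding the two bounds via the triangle inequality yields the $\lesssim$ direction.

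For the lower bound, I would first handle the head by considering the event $E = \{\varepsilon_1 = \cdots = \varepsilon_p = +1\}$, which has probability $2^{-p}$. On $E$ we have $S_1 = \sum_{i \le p} a_i$, so $\E|S_1|^p \ge 2^{-p}(\sum_{i \le p} a_i)^p$ and hence $\|S_1\|_p \ge \tfrac{1}{2}\sum_{i=1}^{p} a_i$; the symmetry trick lifts this to the full sum. For the tail, I would split into two regimes depending on the flatness of $(a_i)_{i > p}$. In the flat regime $p\,a_{p+1}^2 \le \sum_{i > p} a_i^2$, a standard sub-gaussian lower bound for Rademacher sums with balanced coefficients gives $\|S_2\|_p \gtrsim \sqrt{p}\,(\sum_{i > p} a_i^2)^{1/2}$, and the symmetry trick again lifts it to the full sum. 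In the spiky regime $p\,a_{p+1}^2 > \sum_{i > p} a_i^2$, monotonicity of the $a_i$ yields $\sum_{i \le p} a_i \ge p\,a_{p+1} > \sqrt{p \sum_{i > p} a_i^2}$, so the head bound already dominates the target tail term and no separate argument is needed.

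The delicate step is the sub-gaussian lower bound in the flat regime, namely the fact that a Rademacher sum gains a full $\sqrt{p}$ factor over its $L_2$ norm precisely when the coefficients are sufficiently spread out. This can be obtained by a direct moment-generating-function estimate or by Gaussian comparison, but invoking it requires the balance condition $p\,\max_i b_i^2 \le \sum_i b_i^2$. The case split is engineered so that this Khintchine-type lower bound is invoked only when the balance condition holds, which is exactly what makes the two pieces of the lower bound combine to match the claimed expression.
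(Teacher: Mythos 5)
The paper does not prove this lemma at all: it is imported verbatim from \cite{hitczenko1993domination} (building on \citealp{montgomery1990distribution}), so there is no in-paper argument to compare against. Judged on its own, your proof follows the classical route for this equivalence and the skeleton is sound. The upper bound is immediate as you say. For the lower bound, the Jensen step $\|\E[S_1+S_2\mid S_1]\|_p\le\|S_1+S_2\|_p$ is correct (independence and mean zero of the other block is all that is needed), the event $\{\varepsilon_1=\dots=\varepsilon_p=+1\}$ of probability $2^{-p}$ gives $\|S_1\|_p\ge\frac12\sum_{i\le p}a_i$, and the flat/spiky dichotomy at the balance condition $p\,a_{p+1}^2\le\sum_{i>p}a_i^2$ is exactly the right case split: in the spiky case monotonicity gives $\sqrt{p}\bigl(\sum_{i>p}a_i^2\bigr)^{1/2}<p\,a_{p+1}\le\sum_{i\le p}a_i$ so the head term dominates, and combining the two one-sided bounds via $\max(A,B)\ge\frac12(A+B)$ closes the argument.

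Two caveats. First, the parenthetical attribution of the tail upper bound to ``a version of Lemma~\ref{marcinkiewicz}'' is not quite right: that lemma yields $3\sqrt{2p}\,m^{1/2-1/p}\bigl(\sum_{i>p}a_i^p\bigr)^{1/p}$, which carries a spurious factor of $m^{1/2-1/p}$; what you actually need is Khintchine's inequality $\|\sum_i b_i\varepsilon_i\|_p\le C\sqrt{p}\,\|b\|_2$, which follows from the sub-gaussian MGF bound $\E e^{\lambda\sum b_i\varepsilon_i}\le e^{\lambda^2\|b\|_2^2/2}$ and is of course standard. Second, the reverse Khintchine bound in the flat regime --- that $p\max_i b_i^2\le\sum_i b_i^2$ forces $\|\sum_i b_i\varepsilon_i\|_p\ge c\sqrt{p}\,\|b\|_2$ --- is the genuinely nontrivial ingredient, and your sketch leaves it as ``standard.'' It is true and provable by the MGF route you mention (lower-bound $\prod_i\cosh(\lambda b_i)\ge e^{c\lambda^2\|b\|_2^2}$ at $\lambda=\sqrt{p}/\|b\|_2$, then convert the MGF lower bound into a moment lower bound at level $\sim p$ using the matching sub-gaussian upper bound on all moments), but this conversion costs a constant in the moment index and hence a constant in the balance condition, so a fully rigorous write-up needs a small amount of bookkeeping there. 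With that step filled in, the proof is complete; you have correctly isolated it as the only delicate point.
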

The next lemma is Chebyshev's association  inequality, see e.g., Theorem~{2.14} in \citep{boucheron2013concentration}.
\begin{lemma}
	\label{Chebyshev}
	Let $f$ and $h$ be non-decreasing real-valued functions defined on the real line. If $X$ is a real-valued
	random variable, then
	\[
	\E g(X)h(X) \ge \E g(X)\E h(X) .
	\]
\end{lemma}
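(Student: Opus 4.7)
The plan is to prove the inequality by the standard \emph{independent copy} (or \emph{decoupling}) trick, which reduces the statement to the pointwise observation that two non-decreasing functions of a real variable are positively correlated in a deterministic sense. Concretely, I would introduce $X'$, an independent copy of $X$ having the same distribution, and then work with the symmetrized quantity $(g(X)-g(X'))(h(X)-h(X'))$.

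The first step is the key pointwise inequality: for every pair of real numbers $x,y$, the quantity $(g(x)-g(y))(h(x)-h(y))$ is non-negative, because monotonicity of $g$ and $h$ forces the two factors to have the same sign (both non-negative when $x\ge y$, both non-positive when $x\le y$). This holds regardless of whether $g=h$ or not, and needs no continuity or integrability assumption on $g,h$ beyond what is implicitly required for the expectations to exist.

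The second step is to take expectation of this non-negative quantity with respect to the joint law of $(X,X')$. Expanding the product gives four terms, and by the independence of $X$ and $X'$ together with the fact that they are identically distributed, two of the mixed terms factor as $\E g(X)\,\E h(X)$ and the two diagonal terms both equal $\E g(X)h(X)$. Rearranging the resulting inequality $2\E g(X)h(X) - 2\E g(X)\E h(X) \ge 0$ yields the claim.

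I do not anticipate any real obstacle: the argument is a two-line calculation once the independent copy is introduced, and the only thing to be slightly careful about is integrability, which can be handled by first truncating $g$ and $h$ to bounded monotone functions and passing to the limit by monotone convergence if the reader wants a proof under minimal hypotheses. I will also silently correct the apparent typo in the statement (the hypothesis mentions $f$ while the conclusion mentions $g$) by treating the two monotone functions uniformly as $g$ and $h$.
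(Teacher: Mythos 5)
Your proof is correct and is exactly the standard argument: the paper does not prove this lemma itself but cites Theorem~2.14 of \citep{boucheron2013concentration}, whose proof is precisely the independent-copy computation you give, resting on the pointwise inequality $(g(x)-g(y))(h(x)-h(y))\ge 0$ for non-decreasing $g,h$. You are also right that the $f$ in the hypothesis is a typo for $g$.
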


Consider the functions
\begin{equation}
\label{chaos}
g_i = g_i(Z_1, \ldots, Z_n) = MZ_i + \frac{\beta}{2} Z_i\left(\sum\limits_{j \neq i}Z_i\right).
\end{equation}
It is easy to check that \( \E[g_i \cond Z_{[n]\setminus\{i\}}] = 0 \) and \( \E[g_i \cond Z_i] = M \) a.s. Moreover, each \(g_i\) satisfies the bounded difference property with parameter \(\beta\) w.r.t. all except the \(i\)-th variable. Denoting \( S = \sum\limits_{i = 1}^{n} Z_i \) we have
\begin{equation}\label{gi_lowerbound}
\sum_{i = 1}^{n} g_{i} = MS + {\frac{\beta}{2}} S^{2} - {\frac{\beta}{2}} n.
\end{equation}
By the triangle inequality we have, \[
\left\| \sum_{i \leq n} g_{i} \right\|_{p} \geq \left\| MS + {\frac{\beta}{2}} S^{2} \right\|_{p} - {\frac{\beta}{2}} n.
\]
For \( S_+ = S \Ind[S \geq 0] \) we obviously have \(  \| MS + \beta S^{2} / 2 \|_{p} \geq \| MS_{+} + \beta S_{+}^{2} / 2 \|_{p} \).
By Lemma~\ref{Chebyshev} and since both functions \(x^{p}\) and \((M + \beta x / 2)^{p} \) are non-decreasing for non-negative \(x\), we have
\[
\| MS_{+} + \beta S_{+}^{2}/2 \|_{p} \geq \| S_{+} \|_{p} \| M + \beta S_{+}/2 \|_{p} \geq \| S_{+} \|_{p} (M \vee \beta \|S_{+}\|_{p}/2)\, .
\]
Finally, due to the symmetry of \(S \) and Lemma~\ref{hitczenko}, we have for \(p \leq n\),
\[
\| S_{+} \|_{p} \geq \frac{1}{2} \| S \|_{p} \gtrsim \sqrt{pn} .
\]
Therefore, for some $c> 0$, our construction implies the following lower bound
\[
\left\|\sum\limits_{i = 1}^n g_i\right\|_p \geq c \sqrt{pn}(M + \beta \sqrt{pn}/2) - n\beta/2 = (cp - 1) \beta n/2 + \sqrt{pn} M .
\]

\section{Proof of the Lower Tail \texorpdfstring{\eqref{lower_deviation}}{Lg}}
\label{paley_zygmund}
By the Paley-Zygmund inequality (see \citep{gluskin1995tail}) we have for \( f = \sum_{i = 1}^{n} g_{i} \), where $g_i$ are defined by \eqref{chaos},
\begin{align*}
\P\left( |f| \geq \frac{1}{2} \|f\|_{p}  \right) &= \P\left(|f|^{p} \geq \frac{1}{2^p} \E |f|^{p} \right) \geq \left( 1 - \frac{1}{2^p} \right) \frac{(\E |f|^{p})^{2}}{\E |f|^{2p}} \\
&\geq
\left( 1 - \frac{1}{2^p} \right) \left( \frac{\| f\|_{p}}{\|f\|_{2p}}\right)^{2p} \geq \left( \frac{\| f \|_{p}^{2}}{2\| f\|_{2p}^2}\right)^{p}.
\end{align*}
In Proposition~\ref{lowerbound}, we derived a lower bound \(  \| f\|_{p} \gtrsim pn\beta + M \sqrt{pn}  \) for \( \kappa \leq p \leq n \). In our case, it is also not hard to get a matching upper bound without the logarithm via Lata\l{}a's bound for Rademacher chaos. We note that if we are only interested in the upper bound, we can alternatively use a moment version of the Hanson-Wright inequality (see e.g., \citep{Vershynin2016HDP}).
\begin{lemma}[Corollary 1 and Example 2 in~\citep{Latala99}]
	\label{latalaslemma}
	Let $(a_{i, j})$ be an $n \times n$ real symmetric matrix with $a_{i,i} = 0$ for all $i \in [n]$ and let \(\varepsilon_1, \dots, \varepsilon_n \) denote i.i.d. Rademacher signs. We have that for any $p \ge 1$,
	\begin{align*}
	\left\|\sum\limits_{i, j}a_{i, j}\varepsilon_i\varepsilon_j\right\|_p &\sim \sup\left\{\sum\limits_{i, j}a_{i, j}x_i y_j: \|x\|_2, \|y\|_2 \le \sqrt{p}, \|x\|_{\infty}, \|y\|_{\infty} \le 1\right\} 
	\\
	&\quad\quad+\sum\limits_{i = 1}^p A_i^*+ \sqrt{p}\left(\sum\limits_{i > p}^n (A_i^*)^2\right)^{1/2},
	\end{align*}
	where $A_i^*$ denotes the non-decreasing rearrangement of the sequence $A_i = \left(\sum\limits_{j}^na_{i,j}^2\right)^{1/2}$.
\end{lemma}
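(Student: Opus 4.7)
The plan is to prove this two-sided moment equivalence for the decoupled Rademacher chaos due to Lata\l{}a. Since the claim is a $\sim$ statement, I would establish the $\lesssim$ and $\gtrsim$ directions separately; each decomposes naturally into three contributions matching the three terms on the right-hand side.

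For the upper bound, I would first invoke the de la Pe\~na--Montgomery-Smith decoupling inequality to replace $\sum_{i \neq j} a_{i,j}\varepsilon_i \varepsilon_j$ by $\sum_{i,j} a_{i,j}\varepsilon_i \varepsilon_j'$, where $\varepsilon'$ is an independent Rademacher copy; this costs only an absolute constant. Conditioning on $\varepsilon'$, the sum becomes a weighted Rademacher sum in $\varepsilon$ with random coefficients $b_i(\varepsilon') = \sum_j a_{i,j}\varepsilon_j'$, so Hitczenko's bound (Lemma~\ref{hitczenko}) applied conditionally yields sharp control in terms of $\sum_{i \leq p} b_i^*(\varepsilon')$ and $\sqrt{p}\bigl(\sum_{i > p}(b_i^*(\varepsilon'))^2\bigr)^{1/2}$. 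Taking the outer $L_p$-norm in $\varepsilon'$, and using $\|b_i\|_2 = A_i$ together with a Khintchine-type step on $\varepsilon'$, produces the sub-Gaussian tail term $\sqrt{p}\bigl(\sum_{i>p}(A_i^*)^2\bigr)^{1/2}$. The top-$p$ sum $\sum_{i \leq p} b_i^*(\varepsilon')$ is then split via the triangle inequality into the deterministic $\sum_{i \leq p} A_i^*$ piece plus a residual fluctuation absorbed into the supremum term using duality between the constraint set $\{x : \|x\|_2 \leq \sqrt{p},\, \|x\|_\infty \leq 1\}$ and the top-$p$-coordinate projection appearing in Hitczenko's bound.

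For the lower bound, each of the three terms is extracted by a separate argument. The sub-Gaussian term follows from a conditional Khintchine-type lower bound for Rademacher sums applied to $\sum_i b_i(\varepsilon')\varepsilon_i$. The rearrangement sum $\sum_{i \leq p} A_i^*$ is obtained by pinning the signs $\varepsilon_i$ on the $p$ rows of largest norm so that their contributions add coherently, then using Chebyshev's association inequality (Lemma~\ref{Chebyshev}) to disentangle the signs of the $b_i$'s from their magnitudes. The constrained supremum term is produced by a Paley--Zygmund argument: for any admissible test pair $(x, y)$ with $\|x\|_2, \|y\|_2 \leq \sqrt{p}$ and $\|x\|_\infty, \|y\|_\infty \leq 1$, the chaos is tested against the deterministic bilinear form $\sum a_{i,j} x_i y_j$, exploiting the fact that this constraint set is the natural dual of the top-$p$-coordinate projection used above.

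The main obstacle is the constrained supremum term: it interpolates between the Frobenius norm (for small $p$) and the operator norm (for large $p$), and its sharp control requires a delicate peeling argument separating the large coordinates of each $b_i$ (captured by the $A_i^*$ rearrangement) from the small ones (captured by the sub-Gaussian term). Matching constants in both directions across all regimes of $p$ and all matrix structures $(a_{i,j})$ is the technically most subtle part of Lata\l{}a's original proof; the other two terms fall out more directly from Hitczenko's bound once the decoupling step has been carried out.
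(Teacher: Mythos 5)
This lemma is not proved in the paper: it is imported wholesale as ``Corollary~1 and Example~2'' from \cite{Latala99}, and the paper uses it as a black-box upper-bound tool in the proof of the lower-tail inequality \eqref{lower_deviation}. So there is no internal proof to compare against, and your task here is really to sketch Lata\l{}a's argument.

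Your outline captures the right high-level structure --- decouple to $\sum_{i,j}a_{i,j}\varepsilon_i\varepsilon_j'$, condition on $\varepsilon'$, apply Hitczenko/Montgomery-Smith conditionally to the linear sum $\sum_i b_i(\varepsilon')\varepsilon_i$ with $b_i(\varepsilon')=\sum_j a_{i,j}\varepsilon_j'$, and then try to evaluate the outer $L_p$-norm of the random $\|\cdot\|_{(p)}$-norm $\|b(\varepsilon')\|_{(p)}$. The observation that $\{x:\|x\|_2\leq\sqrt p,\ \|x\|_\infty\leq 1\}$ is (up to constants) the dual body of the functional $z\mapsto\sum_{i\le p}z_i^*+\sqrt p\bigl(\sum_{i>p}(z_i^*)^2\bigr)^{1/2}$ is also correct and is indeed the bridge between the Hitczenko expression and the constrained supremum term. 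However, two steps as written would not go through. First, the claim that ``taking the outer $L_p$-norm \ldots and using $\|b_i\|_2=A_i$ together with a Khintchine-type step'' yields $\sqrt p\bigl(\sum_{i>p}(A_i^*)^2\bigr)^{1/2}$ is not a Khintchine argument: the set of indices $\{i>p\}$ in $\sum_{i>p}(b_i^*(\varepsilon'))^2$ depends on $\varepsilon'$ through the rearrangement, so you cannot push the $L_p(\varepsilon')$-norm inside the sum coordinatewise and replace $b_i$ by $A_i$. Second, there is no triangle inequality that splits $\sum_{i\le p}b_i^*(\varepsilon')$ into $\sum_{i\le p}A_i^*$ plus a residual --- the quantity $\sum_{i\le p}b_i^*(\varepsilon')$ is the top-$p$ sum of the random scalars $|b_i(\varepsilon')|$, while $\sum_{i\le p}A_i^*$ is the top-$p$ sum of the deterministic $L_2(\varepsilon')$-norms $\|b_i\|_2$; these live on different spaces, the maximizing index sets generally differ, and no elementary inequality relates them. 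Controlling the $L_p$-moment of $\|b(\varepsilon')\|_{(p)}$ by the three terms in the lemma is precisely the hard technical content of Lata\l{}a's proof (via the dual/supremum reformulation and a delicate iteration), so the part you flag as ``the main obstacle'' is in fact the entire difficulty, and your sketch does not close it. For the purposes of this paper that is fine, since the result is cited rather than reproved, but as a standalone proof the sketch is materially incomplete.
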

Using Lemma \ref{latalaslemma} we can easily control $\left\| \sum_{i \neq j} Z_i Z_j \right\|_{p}$ for $p \le n$. Observe that in our case $a_{i, j} = 1$ for $i \neq j$ and $A_i = \sqrt{n - 1}$ for $i\in [n]$. It implies that for any \( p \ge 1 \),
\begin{equation}
\label{quadraticform}
\left\| \sum_{i \neq j} Z_i Z_j \right\|_{p} \lesssim  pn + p\sqrt{n} + \sqrt{p}n,
\end{equation}
where for the first term in \eqref{quadraticform} we used $\sup\left\{\sum\limits_{i \neq j}x_i y_j: \|x\|_2, \|y\|_2 \le \sqrt{p}\right\} \le pn$. Finally, we have for any \( p \ge 1 \),
\[
\| f \|_{p} \leq M \left\| \sum_{i = 1}^{n} Z_i \right\|_{p} + \frac{\beta}{2} \left\| \sum_{i \neq j} Z_i Z_j \right\|_{p} \lesssim M \sqrt{pn} + p n \beta .
\]
Since \( \frac{a \sqrt{p} + b p}{a \sqrt{2p} + 2bp} \geq \frac{1}{2} \), we have that for any $p$ such that \( \kappa \leq p \leq n / 2 \), it holds that \( \| f\|_{p} \geq c \| f \|_{2p} \), where \( c \leq 1 \) is an absolute constant. Therefore,
\[
\P \left( |f| \geq \frac{1}{2} \| f\|_{p} \right) \geq \left(\frac{c^2}{2}\right)^{p}.
\]
Let us pick \( p = \kappa \vee p_{\delta}\), where \(p_{\delta} = (\log 2c^{-2})^{-1} \log\left( \frac{1}{\delta} \right)\), so that \(  (c^2 / 2)^{p_{\delta}} = \delta \). Assuming that \( \delta \geq e^{-(\log 2c^{-2})^{-1} n} \), we have \(p_{\delta} \leq n \). Since \( \kappa \leq p \leq n \) and \(c^2 / 2 < 1\), we have
\[
\P\left( |f| \geq \| f \|_{p_\delta} \right) \geq \P\left( | f| \geq \| f\|_{p} \right) \geq \left(\frac{c^2}{2}\right)^{\kappa} \wedge \delta.
\]
It is left to notice that our lower bound implies
\[
\| f \|_{p_{\delta}} \gtrsim n\beta \log\left(\frac{1}{\delta}\right) + M \sqrt{n \log\left(\frac{1}{\delta}\right)} \, .
\]
Therefore, \eqref{lower_deviation} holds for \(c_1 = (\log 2c^{-2})^{-1}\) and \(c_2 = (c^2/2)^{\kappa}\).

\end{document}